
\documentclass{article}
\usepackage{algpseudocode,algorithm,algorithmicx}
\newcommand*\Let[2]{\State #1 $\gets$ #2}
\algrenewcommand\algorithmicrequire{\textbf{Require:}}
\algrenewcommand\algorithmicensure{\textbf{Returns:}}
\usepackage{microtype}
\usepackage{graphicx}
\usepackage{subfigure}
\usepackage{booktabs} 

\usepackage{hyperref}


\usepackage[accepted]{icml2021}

\usepackage{amsmath}
\usepackage{amsthm}
\usepackage{thmtools,thm-restate}
\usepackage{wrapfig}
\usepackage{amsfonts}
\usepackage{amssymb}
\usepackage{accents}
\usepackage{cleveref}
\usepackage{comment}
\usepackage{svg}
\usepackage{placeins}

\usepackage[english]{babel}

\newcommand{\norm}[1]{\left\lVert#1\right\rVert}
\newcommand{\hatcal}[1]{\hat{\mathcal{#1}}}

\newtheorem{definition}{Definition}

\DeclareMathOperator*{\argmax}{arg\,max}

\DeclareMathOperator{\R}{\mathbb{R}}
\providecommand{\e}[1]{\ensuremath{\times 10^{#1}}}


\icmltitlerunning{Provable Lipschitz Certification for Generative Models}

\begin{document}

\twocolumn[
\icmltitle{Provable Lipschitz Certification for Generative Models}



\icmlsetsymbol{equal}{*}

\begin{icmlauthorlist}
\icmlauthor{Matt Jordan}{ut}
\icmlauthor{Alexandros G. Dimakis}{ut}
\end{icmlauthorlist}

\icmlaffiliation{ut}{University of Texas at Austin}

\icmlcorrespondingauthor{Matt Jordan}{mjordan@cs.utexas.edu}

\icmlkeywords{Machine Learning, ICML}

\vskip 0.3in
]



\printAffiliationsAndNotice{}  
\begin{abstract}
We present a scalable technique for upper bounding the Lipschitz constant of generative models. We relate this quantity to the maximal norm over the set of attainable vector-Jacobian products of a given generative model. We approximate this set by layerwise convex approximations using zonotopes. Our approach generalizes and improves upon prior work using zonotope transformers and we extend to Lipschitz estimation of neural networks with large output dimension. This provides efficient and tight bounds on small networks and can scale to generative models on VAE and DCGAN architectures. 
\end{abstract}

\section{Introduction}
We study the problem of bounding the Lipschitz constant of generative models. The central technical difficulty is that these are vector-valued functions with high-dimensional outputs, so the techniques for Lipschitz estimation of scalar-valued neural networks do not directly translate. Our approach is to frame the Lipschitz constant estimation problem as an optimization over the range of attainable vector-Jacobian products. We overapproximate this set via layerwise convex approximations, and then solve a relaxed version of the optimization problem. 

The primary challenge is computing the feasible set of vector-Jacobian products over a range of inputs and vectors. We generate an overapproximation of this set by representing it as a zonotope. While prior work has performed reachability analysis of neural networks using zonotopes, these approaches only consider the forward pass. Our technique generalizes this prior work and is able to yield tighter bounds. Additionally, we are able to apply our approach to backpropagation, where we pass zonotopes backwards in place of vectors.

We present a general algorithm for mapping zonotopes through a variety of nonlinear operators and relate this to a 2-dimensional geometric problem that we solve optimally. The final norm-maximization that arises turns out to be 
equivalent to a mathematical question called the Grothendieck problem. We demonstrate that the linear-programming relaxation of this problem can be solved in linear time with our machinery. To fairly compare our work against existing Lipschitz estimation techniques, we first compare the estimated Lipschitz value and runtime of our algorithm on networks of increasing size trained on a toy dataset. We observe that our approach favorably trades-off accuracy for efficency and yields tighter bounds 
compared to previous techniques. 
Further, it can significantly improve the runtime of exact Lipschitz computations.  We scale our technique to generative models on MNIST and CIFAR-10, using well-known architectures like DCGAN and VAEs with both fully-connected and convolutional layers \cite{Kingma2013-en, Radford2015-hi}. Our approach yields much tighter bounds compared to any other technique that can handle vector-valued networks and can scale to such architecture sizes.

\section{Related Work}
\paragraph{Robustness Certification:} 
Lipschitz estimation is closely related to robustness certification. Here, the goal is to provide a certificate of robustness against adversarial attacks for a specified network and input region. The techniques of interest in this domain are Lipschitz approximation and reachability analysis.  It has been noted multiple times that an upper bound to the Lipschitz constant can provide a guarantee of robustness against adversarial examples \cite{Hein2017-ky, Weng2018-lf, Weng2018-gr}. Reachability analysis is frequently couched in the language of abstract interpretations where the goal is to develop sound transformations
to map sets through the forward pass of a given classifier. The classes of sets considered include hyperboxes and zonotopes, as well as polytopes, imageStars, and linear bounds \cite{Singh2019-mf, SinghGagandeep2019-ki, Mirman2018-nx, Zico_Kolter2017-va, Weng2018-gr, Xu2020-du, Zhang2018-fl, Singh2018-tg, Tran2020-hl}. 

\paragraph{Lipschitz Approximation:} A number of recent works provide either heuristic estimates or provable upper bounds of the Lipschitz constant of neural networks. For ReLU networks, this problem is known to be NP-hard and has strong inapproximability guarantees \cite{Virmaux2018-ti, Jordan2020-tq}. However for small networks it has been shown that this quantity may be estimated to a reasonable degree of accuracy. The majority of these works are either unable to handle vector-valued neural networks or are unable to scale to larger networks. Provable upper bounds of the Lipschitz constant may be attained using interval analysis \cite{Weng2018-gr}, semidefinite programming \cite{Fazlyab2019-im}, linear or mixed-integer programming \cite{Jordan2019-sw}, or polynomial optimization \cite{noauthor_undated-un, Chen2020-ul}. Heuristic estimates of the Lipschitz constant may be attained by either randomness and extreme-value theory \cite{Weng2018-lf}, or by a greedy algorithm over the activation patterns \cite{Virmaux2018-ti}. Our work can be viewed as a spiritual successor to the interval analysis approach of Lipschitz estimation and the zonotope abstract interpretation approaches for robustness certification.

\section{Problem Statement and Relaxation Strategy} 
We formally define the problem of estimating the Lipschitz constants of vector-valued functions and provide a broad overview of the strategy for our convex relaxation. We start with the notation we will employ. 
\paragraph{Notation:}
We will denote vectors using lowercase letters, matrices using capital letters and sets using calligraphic letters. We denote the unit-norm ball with respect to the $\alpha$-norm as $B_\alpha$. We refer to the dual norm of the $\alpha$-norm as the norm $\norm{\cdot}_{\alpha^*}$, defined as $\norm{v}_{\alpha^*}:= \sup_{u\in B_\alpha} |u^Tv|$. For a function $f(\cdot): \R^k \to \R^n$, we denote the Jacobian with respect to its argument as $\nabla_x f(x)$, which is a matrix in $\R^{n\times k}$. We will frequently abuse notation and for sets $\mathcal{X}$ and functions $f$, we write $f(\mathcal{X})$ to denote the set $\{f(x)\mid x\in \mathcal{X}\}$. We use $\odot$ to denote the Hadamard product of two vectors, and $\oplus$ to denote the Minkowksi sum of two sets.

\paragraph{Lipschitz constants of vector-valued functions:}
Given a neural network $f: \R^{k}\rightarrow \R^n$, an input domain $\mathcal{X}\subseteq \R^k$ and norms $\norm{\cdot}_\alpha$, $\norm{\cdot}_\beta$ over $\R^k, \R^n$ respectively, the local Lipschitz constant is formally defined as:
\begin{equation}
    L^{(\alpha, \beta)}(f, \mathcal{X}) := \sup\limits_{x, y\in\mathcal{X}} \frac{\norm{f(x)-f(y)}_\beta}{\norm{x-y}_\alpha}.
\end{equation}
When $f$ is continuously differentiable and $\mathcal{X}$ is an open set, this quantity may be written as an optimization over matrix norms of the Jacobian. Letting $\nabla_x f(x)$ be the Jacobian of $f$ evaluated at $x$, we have that 
\begin{equation}\label{eq:jacobian-rewrite}
    L^{(\alpha, \beta)}(f, \mathcal{X}) = \sup_{x\in\mathcal{X}} \norm{\nabla_x f(x)}_{\alpha\to \beta}.
\end{equation}
The matrix norm $\norm{M}_{\alpha \to \beta}$ is defined as 
\begin{equation}\label{eq:matrix-norm}
\norm{M}_{\alpha\to\beta} := \sup_{\norm{v}_\alpha \leq 1} \norm{Mv}_\beta = \sup_{\norm{u}_{\beta^*} \leq 1} \norm{M^Tu}_{\alpha^*}.
\end{equation}
where the second equality follows from the definition of the dual norm. Combining Equations \ref{eq:jacobian-rewrite} and \ref{eq:matrix-norm}, we can formulate the problem of computing the Lipschitz constant as an optimization over vector-Jacobian products: 
\begin{equation}\label{eq:problem-reformulate}
    L^{(\alpha, \beta)}(f, \mathcal{X}) = \sup\limits_{x\in\mathcal{X}} \sup\limits_{u \in B_{\beta^*}} \norm{\nabla_x f(x)^Tu}_{\alpha*}.
\end{equation}
When $f$ is nonsmooth, as in the case of neural networks with ReLU nonlinearities, the optimization over the Jacobian in Equation \ref{eq:problem-reformulate} is replaced with an optimization over Clarke generalized subgradients \cite{Jordan2020-tq}. For our purposes, as we seek only to upper bound this quantity, this distinction is of minimal importance.

The strategy we employ to upper bound Equation \ref{eq:problem-reformulate} will rely on two clear steps. The first step is to generate a sound approximation of the set of vector-Jacobian products of $f$, and the second step is to bound the maximal $\norm{\cdot}_{\alpha^*}$ norm of this set. Concisely, we first develop a set $\mathcal{Y}$ satisfying the containment
\begin{equation}\label{eq:main-containment}
    \{\nabla_xf(x)^Tu \mid x\in\mathcal{X},\quad u\in B_{\beta^*}\} \subseteq \mathcal{Y},
\end{equation}
and then we upper bound the maximal $\alpha^*$ norm of $\mathcal{Y}$.

We will focus in particular on the $L^{(\infty, 1)}(f, \mathcal{X})$ Lipschitz constant. The choice of $\norm{\cdot}_\alpha$ to be the $\ell_\infty$ norm is standard in the robustness certification literature. The choice of $\norm{\cdot}_\beta$ to be the $\ell_1$ norm will yield an upper bound for $L^{(\alpha, p)}(f,\mathcal{X})$ for $p \geq 1$ as the dual ball, $B_{\beta^*}$ is the $\ell_\infty$ ball and contains all other $\ell_p$ balls. This approach relies on mapping $\ell_\infty$ balls through both the forward and backward pass of a network, which zonotopes are well-suited for.

\paragraph{Vector-Jacobian Products of Neural Networks:}
To handle the step of overapproximating the set of vector-Jacobian products, we turn our attention to the structure of the functions we consider. We consider feedforward neural networks with either fully-connected or convolutional layers and elementwise nonlinearities, $\sigma$, such as the ReLU, tanh, or sigmoid operators. A neural network $f$ with $L$ hidden layers may be evaluated according to the following recursion, 
\vspace{-0.5em}
\begin{align}\label{eq:forward-recursion}
    f(x) &:= W_LZ_L(x) + b_L\\ 
    Z_i(x) &= \sigma\Big(\hat{Z_i}(x)\Big)\\
    \hat{Z_i}(x) &= W_iZ_{i-1}(x) + b_i\\
    Z_0(x) & = x,
\end{align}
for $i$ in $\{1,\dots L\}$. When $f$ has convolutional layers, the affine operator in the definition of $\hat{Z_i}(x)$ may be instantiated as a convolution operator.

While the full Jacobian may be expensive to compute, vector-Jacobian products are a standard operation performed on neural nets via the backpropagation algorithm. This may be evaluated according to the following recursion,
\begin{align}
    \nabla_x f(x)^Tu &:= W_1^T J_1(x)^T Y_1(x, u)\\ 
    Y_i(x, u) &= J_{i+1}^T(x) \hat{Y_{i}}(x, u) \\ 
    \hat{Y_i}(x, u) &= W_{i+1}^TY_{i+1}(x, u)\\
    Y_L(x, u) &= u,
\end{align}
where $J_{i}(x)$ is the Jacobian of the $i^{th}$ nonlinearity with respect to its input, $\nabla_{\hat{Z}_i} Z_i(x)$, and $i$ ranges from $1$ to $L-1$. For the standard elementwise nonlinearities, $J_{i}(x)$ is diagonal and may be written as the Hadamard product with a vector. For example, when $\sigma$ is the ReLU operator, this is a Hadamard product with a vector taking entries in $\{0, 1\}$, depending on the sign of the input to each neuron. When convolutional layers are used in place of fully-connected layers, the transpose convolution operator with no bias terms may be used in place of $W_{i+1}^T$ in the definition of $\hat{Y_i}(x,u)$. For generative models that yield images, the outputs are constrained to the hyperbox $[0,1]^n$, usually by applying a sigmoid or tanh layer to the output of $f$. In this case, $\hat{Y}_{L-1}(x, u)$ is $W_L^T J_L^T(x) u$, for $J_L(x)$ denoting the Jacobian with respect to this final nonlinear layer.

As our goal is to generate a set $\mathcal{Y}$ satisfying the containment in equation \ref{eq:main-containment}, we can unroll the recursions and iteratively construct sets which serve as sound approximations of each element in the recursion. For an input range of $\mathcal{X}$, our algorithm will yield a collection of sets $\mathcal{Z}_i, \hat{\mathcal{Z}}_i, \mathcal{J}_i, \mathcal{Y}_i, \hat{\mathcal{Y}}_i$ satisfying the containments 
\vspace{-0.5em}
\begin{align}
    \mathcal{X} &\subseteq \mathcal{Z}_0 &     B_{\beta^*} &\subseteq \hatcal{Y}_L \label{eq:setrec-inputs}\\
    W_i\mathcal{Z}_{i-1} + b_i &\subseteq \hatcal{Z}_i  &   W^T_i \hatcal{Y}_i &\subseteq \mathcal{Y}_{i-1} \label{eq:setrec-aff}\\
    \sigma(\hatcal{Z}_i) &\subseteq \mathcal{Z}_i &   \mathcal{J}_i \odot \mathcal{Y}_{i+1} &\subseteq \hatcal{Y}_i \label{eq:setrec-nonlin}\\
       \nabla_z \sigma(\mathcal{Z}_i) &\subseteq \mathcal{J}_i \label{eq:setrec-jac}&&
\end{align}

That is, we first overapproximate the range of attainable values of each layer of the neural net, $Z_i(\mathcal{X})$ and $\hat{Z}_i(\mathcal{X})$. This allows us to create sets that contain the true range of gradients for each nonlinearity, $\nabla_{Z_i} \hat{Z}_i(\mathcal{X})$ as per the left column. Then a similar procedure is used to obtain sets which contain the true range of partial vector-Jacobian products as the backpropagation algorithm is performed, i.e. $Y_i(\mathcal{X}, B_{\beta^*})$.  Ultimately this will yield a a set that contains the set of attainable vector-Jacobian products $\nabla_x f(\mathcal{X})^T B_{\beta^*}$. Soundness is encapsulated in the following theorem. 
\begin{restatable}{theorem}{soundnessThm}\label{thm:main-soundness-thm}
For feedforward neural networks $f$, an input set $\mathcal{X}$ and sets $\mathcal{Z}_i, \hat{\mathcal{Z}}_i, \mathcal{J}_i, \mathcal{Y}_i, \hat{\mathcal{Y}_i}$ satisfying the containments in Equations \ref{eq:setrec-inputs}-\ref{eq:setrec-jac}, the set of vector-Jacobian products satisfies 
\begin{equation}\label{eq:final-containment}
    \{\nabla_x f(x)^T u\mid x\in\mathcal{X}\quad u\in B_{\beta^*}\} \subseteq \mathcal{Y}_0.
\end{equation}
For such a $\mathcal{Y}_0$, the Lipschitz constant of $f$ may be upper-bounded by  maximizing the $\norm{\cdot}_{\alpha^*}$ norm over the set $\mathcal{Y}_0$.
\end{restatable}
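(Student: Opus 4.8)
\emph{Strategy.} The statement has two halves: the set containment \eqref{eq:final-containment}, and the fact that $\sup_{y\in\mathcal{Y}_0}\norm{y}_{\alpha^*}$ then upper-bounds $L^{(\alpha,\beta)}(f,\mathcal{X})$. The plan is to obtain the containment by a two-phase induction that unrolls the forward recursion \eqref{eq:forward-recursion} and then the backprop recursion, and to obtain the bound as an immediate corollary of the reformulation \eqref{eq:problem-reformulate}. The single fact doing the work everywhere is the monotonicity of set-images under the paper's abuse of notation: if $\mathcal{A}\subseteq\mathcal{B}$ then $g(\mathcal{A})\subseteq g(\mathcal{B})$ for any map $g$ (and likewise for affine maps $z\mapsto W z+b$, the elementwise $\sigma$, the ``Jacobian'' map $z\mapsto\nabla_z\sigma(z)$, and the Hadamard product of two sets). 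So each of \eqref{eq:setrec-inputs}--\eqref{eq:setrec-jac} is exactly the inclusion needed to push one step of the recursion through the corresponding overapproximating set, and the proof is just the composition (``telescoping'') of these inclusions.

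\emph{Forward phase.} First I would show by induction on $i$ that $Z_i(\mathcal{X})\subseteq\mathcal{Z}_i$ and $\hat Z_i(\mathcal{X})\subseteq\hatcal{Z}_i$. The base case is $Z_0(\mathcal{X})=\mathcal{X}\subseteq\mathcal{Z}_0$. For the step, $Z_{i-1}(\mathcal{X})\subseteq\mathcal{Z}_{i-1}$ gives $\hat Z_i(\mathcal{X})=W_iZ_{i-1}(\mathcal{X})+b_i\subseteq W_i\mathcal{Z}_{i-1}+b_i\subseteq\hatcal{Z}_i$ by \eqref{eq:setrec-aff}, and then $Z_i(\mathcal{X})=\sigma(\hat Z_i(\mathcal{X}))\subseteq\sigma(\hatcal{Z}_i)\subseteq\mathcal{Z}_i$ by \eqref{eq:setrec-nonlin}. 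Since the per-layer Jacobian $J_i(x)$ is a fixed (diagonal, hence Hadamard-with-a-vector) function of the input to the $i$-th nonlinearity, the set of attainable $J_i(x)$ over $x\in\mathcal{X}$ is the image of the relevant $\hat Z$-set under $z\mapsto\nabla_z\sigma(z)$ and is therefore contained in $\mathcal{J}_i$ by \eqref{eq:setrec-jac}. In the nonsmooth ReLU case one reads $\nabla_z\sigma$ as the Clarke generalized Jacobian --- allowing the value $[0,1]$ at a kink --- and nothing else in the argument changes.

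\emph{Backward phase.} Then I would run the mirror-image decreasing induction: $\hat Y_i(\mathcal{X},B_{\beta^*})\subseteq\hatcal{Y}_i$ and $Y_i(\mathcal{X},B_{\beta^*})\subseteq\mathcal{Y}_i$. The base case is $Y_L(x,u)=u$, so the set is $B_{\beta^*}$, which sits in the top backward set by \eqref{eq:setrec-inputs}. The transpose-affine step $\hat Y_i=W^T_{i+1}Y_{i+1}$ is handled verbatim like the forward affine step via linearity and \eqref{eq:setrec-aff}. The nonlinear step $Y_i=J^T_{i+1}\hat Y_i$ is \textbf{the one place where soundness is not automatic and must be checked}: for a fixed $(x,u)$ the factors $J_{i+1}(x)$ and $\hat Y_i(x,u)$ are coupled through the same $x$, but we bound $Y_i(x,u)$ --- a Hadamard product of the Jacobian diagonal with $\hat Y_i(x,u)$ --- by putting the first factor in $\mathcal{J}_{i+1}$ (forward phase) and the second in the appropriate $\hat Y$-set (induction hypothesis) and letting the two factors range \emph{independently}. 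This decoupling can only enlarge the set, so \eqref{eq:setrec-nonlin} still holds and soundness is preserved. Unrolling to the bottom yields $\nabla_x f(\mathcal{X})^TB_{\beta^*}\subseteq\mathcal{Y}_0$, i.e. \eqref{eq:final-containment}; for image-generating models one simply appends one extra nonlinear/Hadamard step at the top of the backward recursion for the output $\sigma$, exactly as described in the text after \eqref{eq:forward-recursion}.

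\emph{From the containment to the bound.} Finally, \eqref{eq:problem-reformulate} says $L^{(\alpha,\beta)}(f,\mathcal{X})=\sup_{x\in\mathcal{X}}\sup_{u\in B_{\beta^*}}\norm{\nabla_x f(x)^Tu}_{\alpha^*}$, and since this is a supremum of $\norm{\cdot}_{\alpha^*}$ over a subset of $\mathcal{Y}_0$ it is at most $\sup_{y\in\mathcal{Y}_0}\norm{y}_{\alpha^*}$ --- the claimed bound. (In the nonsmooth case \eqref{eq:problem-reformulate} is read with Clarke subgradients in place of $\nabla_x f$, and the inequality is unchanged since the $\mathcal{J}_i$ were built to contain those subgradients.) I expect the main obstacle to be not any individual estimate but the bookkeeping of interleaving the two recursions --- in particular that the sets $\mathcal{J}_i$ feeding the backward pass only become available after the forward pass is complete, and that the Hadamard step, where a product of $x$-coupled quantities is replaced by a product over independently ranging sets, is genuinely a relaxation; everything else is a mechanical chain of inclusions, each licensed by monotonicity of the relevant image map.
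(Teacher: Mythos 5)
Your proposal is correct and follows essentially the same route as the paper's proof: an iterated application of the monotonicity of set images ($\mathcal{A}\subseteq\mathcal{B}\implies g(\mathcal{A})\subseteq g(\mathcal{B})$) through the forward recursion, then the Jacobian sets, then the backward recursion, with the Lipschitz bound following from Equation \ref{eq:problem-reformulate}. Your version is in fact more explicit than the paper's (notably in flagging the decoupling of the $x$-dependence at the Hadamard step as the genuine relaxation), but the argument is the same.
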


Abstracting this slightly, we notice that each of the recursive containments follow one of four forms. The first is the mapping of sets through affine operators as in equations \ref{eq:setrec-aff}. Next we require the mapping of sets through the nonlinearity $\sigma$ or an elementwise multiplication as in equation \ref{eq:setrec-nonlin}. Third we have the Jacobian operator of $\sigma$ as in equation \ref{eq:setrec-jac}. In the sequel we will describe a family of sets that trades off expressiveness with efficiency of representation, and then we develop a technique to perform each of these four operations in a way that satisfies the required containments. 

\section{Hyperboxes and Zonotopes}
The key idea to handle the sound approximations required by Theorem \ref{thm:main-soundness-thm} is to introduce a family of sets and develop transformations that are closed under this family of sets and preserve the necessary containment for the four classes of operators. The families of sets we will consider here are hyperboxes and zonotopes. Throughout our Lipschitz estimation procedure, we will frequently make use of the fact that linear programs and Minkowski sums of these sets are efficiently computable. 

\paragraph{Hyperboxes:} An axis-aligned hyperbox in $\R^d$ may be defined by a center point $c \in \R^d$ and a radius vector $r\in R^d$ with $r\geq 0$, such that the set $H(c,r)$ may be defined as 
\[H(c,r) := \{c + r\odot y \mid \norm{y}_\infty \leq 1\}.\] 
 Hyperboxes have very efficient representations, and enjoy many nice computational properties. Namely, linear programs over $H(c,r)$ admit a closed-form solution computable in time $O(d)$ as do Minkowski sums:
\begin{align*}
    \max_{x\in H(c,r)} a^Tx &= a^Tc + \norm{a \odot r}_1\\
    H(c_1, r_1) \oplus H(c_2, r_2) &= H(c_1+c_2, r_1+r_2).
    \end{align*}

\paragraph{Zonotopes:}
Zonotopes are a family of sets that can be much more expressive than hyperboxes but also enjoy many of the same efficient subroutines. A zonotope may be defined as the image of an affine operator applied to a hyperbox, or equivalently, a Minkowski sum of line segments. Typically a zonotope in $\R^d$ is represented in the G-representation, where a center $c \in \R^d$ and a generator matrix $G\in \R^{d\times m}$ are supplied. The number of columns, $m$ of $E$ is referred to as the degrees of freedom of a zonotope. Formally, these sets are described as 
\[Z(c,E) := \{c+Ey\mid \norm{y}_\infty \leq 1\}.\]
Linear programs over zonotopes also admit a closed form solution as 
\[\max_{x\in Z(c,E)} a^Tx = a^Tc + \norm{E^Ta}_1\]
which follows from the definitions of dual norms. An important and frequently used application of this fact is that coordinate-wise lower and upper bounds may be efficiently computed. Indeed, the smallest hyperbox containing a zonotope may be written as 
\[Z(c,E) \subseteq H(c, |E|\vec{1}).\]
where $|E|$ denotes the absolute value operator applied elementwise to $E$.
Similarly, the Minkowski sum of two zonotopes is efficiently computable as
\[Z(c_1, E_1) \oplus Z(c_2, E_2) = Z(c_1+c_2, E_1 || E_2)\] 
where $E_1||E_2$ is the concatenation of the columns of $E_1$ with those of $E_2$.

\section{Zonotopes and Sound Pushforward Operators}
Now  we describe how to construct sound transformations as required by the operations outlined in Equations \ref{eq:setrec-inputs}-\ref{eq:setrec-jac}.
\paragraph{Affine operators:}
Both hyperboxes and zonotopes have efficient sound transformations when mapping through affine operators. Consider an affine operator $x\to Ax+b$. In general, hyperboxes are not closed under affine operation, but the tightest sound operator maps a hyperbox $H(c,r)$ to the hyperbox $H(Ac+b,|A|r)$. Zonotopes, on the other hand, are closed under affine operation and $Z(c,r)$ maps to $Z(Ac+b, AE)$. 

\paragraph{Elementwise nonlinearities:}
In general, zonotopes may not be closed under elementwise nonlinearities. Here we will demonstrate a strategy for these mappings that are optimal in a sense and improve upon the mappings of zonotopes through elementwise nonlinearities from prior works.

In general, the problem we consider is to map a zonotope $Z\subseteq \R^d$ through an operator $\Phi :\R^d \to \R^d$ where $\Phi(x) = (\phi(x_1), \dots, \phi(x_d))$. That is, we wish to develop a zonotope $Z'$ satisfying $\Phi(Z)\subseteq Z'$.

The strategy we employ to construct $Z'$ is to retain the structure of $Z$ and incorporate new degrees of freedom. We will scale $Z$ along each coordinate and cover the errors by taking the Minkowski sum with a new zonotope: 
\[Z' = (\Lambda\odot Z) \oplus \hat{Z}\]
where $\Lambda$ is a vector and $\hat{Z}$ represents the a zonotope containing the new degrees of freedom. The following sufficient condition states that this transformation satisfies the desired containment property: 
\begin{restatable}{lemma}{zonoElwise}\label{lemma:zono-containment}
For any zonotope $Z \subset \R^d$ and any operator $\Phi$ operating over $\R^d$, if $\hat{Z}$ is a zonotope satisfying the containment
\begin{equation} \label{eq:soundness-1}
\{\Phi(z) - \Lambda \odot z \mid z\in Z\} \subseteq \hat{Z}
\end{equation}
then 
\[\Phi(Z) \subseteq (\Lambda \odot Z) \oplus \hat{Z}.\]
\end{restatable}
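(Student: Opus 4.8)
The plan is to prove the containment \emph{pointwise}, by exhibiting, for each $z \in Z$, an explicit decomposition of $\Phi(z)$ into a point of $\Lambda \odot Z$ plus a point of $\hat{Z}$. The key observation is the trivial algebraic identity
\[
\Phi(z) = (\Lambda \odot z) + \big(\Phi(z) - \Lambda \odot z\big),
\]
valid for every $z \in \R^d$. Everything else is just unwinding the definitions of the Hadamard-scaled set and of the Minkowski sum.

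Concretely, I would fix an arbitrary $z \in Z$ and set $a := \Lambda \odot z$ and $b := \Phi(z) - \Lambda \odot z$. By the definition of $\Lambda \odot Z$ we have $a \in \Lambda \odot Z$, and by the hypothesis in Equation \ref{eq:soundness-1} we have $b \in \hat{Z}$. Hence $\Phi(z) = a + b$ lies in $(\Lambda \odot Z) \oplus \hat{Z}$ by the definition of the Minkowski sum. Since $z \in Z$ was arbitrary, taking the union over all such $z$ yields $\Phi(Z) \subseteq (\Lambda \odot Z) \oplus \hat{Z}$, which is exactly the claim.

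For completeness I would also remark that the right-hand side stays inside the families of sets we work with: $\Lambda \odot Z$ is the image of the zonotope $Z$ under the diagonal linear map $\mathrm{diag}(\Lambda)$, and zonotopes are closed under affine maps; the Minkowski sum of two zonotopes is again a zonotope, obtained by concatenating generator matrices as noted in the discussion of hyperboxes and zonotopes. Thus $Z' = (\Lambda \odot Z) \oplus \hat{Z}$ has an explicit $G$-representation.

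There is essentially no obstacle here: once the decomposition is written down, the lemma is a restatement of the definition of the Minkowski sum. The only subtlety is bookkeeping — the error set in the hypothesis must be precisely $\{\Phi(z) - \Lambda \odot z \mid z \in Z\}$, with the \emph{same} $z$ appearing in both summands, so that $a$ and $b$ are genuinely coupled and not chosen independently; this is why the hypothesis quantifies over a single $z$ rather than over $Z \times Z$. The genuine difficulty deferred to the rest of the paper is to construct a small $\hat{Z}$ and a good vector $\Lambda$ so that Equation \ref{eq:soundness-1} holds with few generators; the present lemma only certifies \emph{soundness} of the $(\Lambda \odot Z) \oplus \hat{Z}$ template, not its tightness.
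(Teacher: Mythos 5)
Your proof is correct and is essentially identical to the paper's: both decompose $\Phi(z) = (\Lambda \odot z) + (\Phi(z) - \Lambda \odot z)$, note that the first summand lies in $\Lambda \odot Z$ and the second lies in $\hat{Z}$ by the hypothesis of Equation \ref{eq:soundness-1}, and conclude via the definition of the Minkowski sum. Your additional remarks about the coupling of the two summands and the closure of zonotopes under these operations are accurate but not needed for the lemma itself.
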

We refer to the set $\{\Phi(z) - \Lambda \odot z \mid z\in Z\}$ as the \emph{residuals}, and reduce the problem to finding a vector $\Lambda$ and set $\hat{Z}$ containing these residuals. Our strategy is to consider sets $\hat{Z}$ that are axis-aligned hyperboxes, i.e. $\hat{Z} = H(b^*, \mu^*)$. While there are many such residual hyperboxes, a reasonable heuristic would be to choose $\Lambda$ to yield the smallest hyperbox satisfying Lemma \ref{lemma:zono-containment}. 

By considering only transformations that scale each coordinate of $Z$ independently and accounting for the residuals with a hyperbox, it suffices to consider each coordinate individually. In this case, the soundness criterion of Equation \ref{eq:soundness-1} reduces to the condition:
\[ \{\phi(z_i) - \Lambda_i z_i \mid z \in Z\} \subseteq [b_i-\mu, b_i+\mu].\]
By our heuristic, we would like to minimize the size of the residual interval, $2\mu$ in the above equation. This may be written as a min-max optimization:
\begin{equation}\label{eq:minmax1}
\min_{\Lambda_i, b_i} \max\limits_{z\in Z} |\phi(z_i) - \Lambda_i z_i - b_i|.
\end{equation}

Indeed, this may be equivalently be viewed as fitting an affine function $L(z_i):= \Lambda_i z_i + b_i$ to the function $\phi(z_i)$ such that the maximum absolute value deviation between $L(z_i)$ and $\phi(z_i)$ is minimized across all $Z$. Now assume that the optimal objective value of the above min-max is $2\mu^*$, and the argmin and argmax are $(\Lambda_i^*, b_i^*)$ and $(z_i^*)$ respectively. Then by definition, we have that
\[\{\phi(z_i) - \Lambda_i^* z_i \mid z\in Z\} \subseteq [b_i^* -\mu^*, b_i^*+\mu^*].\]
By computing the optimal solution to Equation \ref{eq:minmax1} for each coordinate $i$, we can compute $\Lambda^*$ and a residual hyperbox $H(b^*, \mu^*)$ satisfying the sufficient condition required by Lemma \ref{lemma:zono-containment}.

It remains to be seen how to efficiently solve the min-max of Equation \ref{eq:minmax1}. We consider this problem graphically and notice that any 2-dimensional set of points described as the points with vertical deviation of no more than $\mu$ from an affine function $L(z_i)$ is a parallelogram with vertical sides. 
Indeed, we refer to sets of the form 
\[\{(z_i, y_i) \mid z_i \in [l_i, u_i]\quad |y_i-L(z_i)| \leq \mu\}\]
as \emph{vertical parallelograms}, parameterized by the line $L(\cdot)$ and vertical range $\mu$ and denoted as $P(L, \mu)$. As we have shown, the min-max can be reduced to an instance of the vertical-parallelogram fitting problem.

\begin{definition}
The \textbf{vertical parallelogram fitting problem} asks the following question. Given a 2-dimensional set $S$, we seek to find the vertical parallelogram with minimal area that contains the set $S$. 
\end{definition}
Since the horizontal range of the provided set $S$ is fixed, the area of a vertical parallelogram hinges only upon the length of its vertical side. We discuss how to solve this problem in the next section.

Applying this problem to the particular form of the min-max in Equation \ref{eq:minmax1}, we arrive at the following theorem: 
\begin{restatable}{theorem}{setThmOne}
When $S$ is the set $\{(z_i, \phi(z_i)) \mid  z_i \in [l_i, u_i]\}$, the solution to the vertical parallelogram fitting problem yields the optimal solution to Equation \ref{eq:minmax1}. Repeated calls to this subroutine yields the tightest hyperbox fitting the residuals as in equation $\ref{eq:soundness-1}$.
\end{restatable}

\paragraph{Jacobians of Elementwise Operators:}
Now we consider sound operators for the Jacobians of these elementwise nonlinearities, which will ultimately yield the sets $\mathcal{J}_i$ as in equation \ref{eq:setrec-jac}. Specifically the goal is to develop a set containing \[\{\nabla_z \Phi(z) \mid z \in Z \}\]
for any zonotope $Z$. While the strategy presented above certainly applies to this case, we choose to develop a hyperbox approximation for this containment. As hyperboxes allow for independence of coordinates and the $\Phi$ operator is an elementwise operator, this reduces to computing, for each coordinate $i$, the values: 
\[ j_i^{(l)} := \min_{z\in Z} \phi'(z_i) \quad\quad\quad j_i^{(u)}:=\max_{z\in Z} \phi'(z_i).\]
When the $i^{th}$ coordinate of $Z$ is bounded in $[l_i, u_i]$, the above minimum and maximum may be solved efficiently for common nonlinearities. Indeed for ReLU, we have that $j_i^{(l)}=\texttt{sign}(l_i)$ and $j_i^{(u)}=\texttt{sign}(u_i)$. For the sigmoid and tanh operators, this is $j_i^{(l)} = \phi'(\max(\{|l_i|, |u_i|\}))$ and $j_i^{(u)} = \phi'(\min(\{|l_i|, |u_i|\})$. This may be computed for every coordinate $i$ and yields the hyperbox with center $\frac{j^{(u)} + j^{(l)}}{2}$ and radius $\frac{j^{(u)} -j^{(l)}}{2}).$

\paragraph{Elementwise multiplication:}
Using the above machinery, we can handle the elementwise multiplication operator in a sound fashion. Given a zonotope $Z$ and a hyperbox $H$, we wish to develop a zonotope which contains the set 
\[\{x \odot z \mid x\in H,\quad z\in Z\} \]
Parallel to Lemma \ref{lemma:zono-containment}, we employ a strategy where we seek to find the hyperbox that most tightly fits the residual set. This soundness criterion is proved in the following lemma: 
\begin{restatable}{lemma}{zonoSetwise}\label{lemma:soundness-2}
For any zonotope $Z\subseteq \R^d$ and hyperbox $H\subseteq \R^d$, if $\hat{Z}$ is a zonotope satisfying the containment 
\begin{equation}\label{eq:soundness-2}
\{x \odot z - \Lambda \odot z \mid x\in H\quad z \in Z\} \subseteq \hat{Z}
\end{equation}
then
\[\{x \odot z \mid x\in H \quad z\in Z\} \subseteq (\Lambda \odot Z) \oplus \hat{Z}.\]
\end{restatable}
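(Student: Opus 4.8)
The plan is to mimic the proof of Lemma \ref{lemma:zono-containment} almost verbatim, since the only structural change is that the scaled quantity $\Lambda \odot z$ is now being subtracted from the bilinear term $x \odot z$ rather than from $\phi(z)$; the soundness argument is insensitive to what the "target" operator is. The core is a trivial additive decomposition: for every $x \in H$ and every $z \in Z$, write
\[
x \odot z = (\Lambda \odot z) + (x \odot z - \Lambda \odot z).
\]
First I would observe that the first summand lies in $\Lambda \odot Z$, which is immediate from the definition $\Lambda \odot Z = \{\Lambda \odot z \mid z \in Z\}$ together with $z \in Z$. Next I would observe that the second summand lies in $\hat{Z}$: by construction $x \odot z - \Lambda \odot z$ is a member of the residual set $\{x \odot z - \Lambda \odot z \mid x \in H,\ z \in Z\}$, and the hypothesis \ref{eq:soundness-2} asserts precisely that this set is contained in $\hat{Z}$.

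Then, invoking the definition of the Minkowski sum $\mathcal{A} \oplus \mathcal{B} = \{a + b \mid a \in \mathcal{A},\ b \in \mathcal{B}\}$, the displayed decomposition exhibits $x \odot z$ as the sum of a point of $\Lambda \odot Z$ and a point of $\hat{Z}$, hence $x \odot z \in (\Lambda \odot Z) \oplus \hat{Z}$. Since $x \in H$ and $z \in Z$ were arbitrary, taking the union over all such $x, z$ yields the claimed containment $\{x \odot z \mid x \in H,\ z \in Z\} \subseteq (\Lambda \odot Z) \oplus \hat{Z}$, which completes the argument.

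I do not expect any genuine obstacle here: the lemma is a soundness bookkeeping statement whose entire content is the additive splitting and the definition of $\oplus$. The only point worth a line of care is noting that the output $(\Lambda \odot Z) \oplus \hat{Z}$ is again a zonotope, so that it is a legitimate object for the downstream recursion in Equations \ref{eq:setrec-inputs}--\ref{eq:setrec-jac}: $\Lambda \odot Z$ is a zonotope because coordinatewise scaling is an affine map and zonotopes are closed under affine maps (Section on Zonotopes and Sound Pushforward Operators), and the Minkowski sum of two zonotopes is a zonotope by concatenating generator matrices (Section on Hyperboxes and Zonotopes). As in Lemma \ref{lemma:zono-containment}, the lemma itself does not constrain the choice of $\Lambda$ or $\hat{Z}$; it merely certifies that any pair $(\Lambda, \hat{Z})$ satisfying \ref{eq:soundness-2} produces a valid overapproximation, leaving the tightening heuristic (choosing $\Lambda$ to minimize the residual hyperbox) to the subsequent discussion.
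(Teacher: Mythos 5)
Your proposal is correct and follows essentially the same argument as the paper's proof: the additive decomposition $x \odot z = (\Lambda \odot z) + (x \odot z - \Lambda \odot z)$ combined with the hypothesis and the definition of the Minkowski sum. The extra remark that $(\Lambda \odot Z) \oplus \hat{Z}$ is itself a zonotope is a reasonable bookkeeping addition but does not change the substance of the argument.
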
{lemma}

Since $\odot$ acts elementwise, $H$ is a hyperbox, and we only seek to fit a hyperbox residual, we may again consider each coordinate independently. This reduces to another min-max problem where the maximum is now taken over both $Z$ and $H$.
\begin{equation}\label{eq:minmax2}
\min\limits_{\Lambda_i, b_i} \max\limits_{z\in Z, x\in H} |x_i \cdot z_i - \Lambda_i z_i - b_i|.
\end{equation}
We may again solve this via a reduction vertical-parallelogram fitting problem. We can suppose that $z_i$ is contained in the interval $[l^{(z)}_i, u^{(z)}_i]$ and $x_i$ is contained in the interval $[l^{(x)}_i, u^{(x)}_i]$. Then the following theorem relates the vertical parallelogram fitting problem to the elementwise multiplication operator.
\begin{restatable}{theorem}{setThmTwo}
When $S$ is the set $\{(z, x \cdot z) \mid l^{(z)} \leq z \leq u^{(z)} \quad  l^{(x)} \leq x \leq u^{(x)} \}$, the solution to the vertical-parallelogram fitting problem yields the optimal solution to Equation \ref{eq:minmax2}. Repeated calls to this subroutine yields the tightest hyperbox fitting the residuals as in Equation \ref{eq:soundness-2}.
\end{restatable}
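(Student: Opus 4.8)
The plan is to read Equation~\ref{eq:minmax2} one coordinate at a time and show that it is literally an instance of the vertical-parallelogram fitting problem for the planar set $S$; the per-coordinate optima then assemble into a residual hyperbox in exactly the same way as for elementwise nonlinearities. So first I would reduce to a single coordinate. The maximand $|x_i\cdot z_i - \Lambda_i z_i - b_i|$ depends on $x$ and $z$ only through their $i$-th entries, hence $\max_{z\in Z,\,x\in H}$ equals the maximum as $z_i$ and $x_i$ range, independently, over the projections of $Z$ and $H$ onto coordinate $i$. For the hyperbox $H$ this projection is exactly $[l^{(x)}_i, u^{(x)}_i]$; for the zonotope $Z=Z(c,E)$ it is the interval $[l^{(z)}_i, u^{(z)}_i]$, which is the $i$-th side of the smallest enclosing hyperbox $H(c,|E|\vec{1})$ and is genuinely attained because the closed-form linear program over a zonotope is tight. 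Thus passing to intervals introduces no slack, and the fibre of $S$ over any $z_i\in[l^{(z)}_i,u^{(z)}_i]$ is precisely $\{x_i z_i : x_i\in[l^{(x)}_i,u^{(x)}_i]\}$, i.e. the exact set of second coordinates that appear in the maximand at that $z_i$. Writing $L(z_i):=\Lambda_i z_i + b_i$, the inner maximum therefore equals $\max_{(z_i,y)\in S}|y - L(z_i)|$.

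Next I would match this with area minimization over vertical parallelograms. By the definition of $P(L,\mu)$, for a line $L$ and $\mu\ge0$ we have $S\subseteq P(L,\mu)$ if and only if $S$ lies in the strip $\{|y-L(z_i)|\le\mu\}$ over the horizontal range $[l^{(z)}_i,u^{(z)}_i]$, which by the previous step is equivalent to $\max_{(z_i,y)\in S}|y-L(z_i)|\le\mu$. Hence minimizing $\mu$ over all pairs $(L,\mu)$ with $S\subseteq P(L,\mu)$ is exactly Equation~\ref{eq:minmax2}: the minimizing line supplies $(\Lambda_i^*, b_i^*)$ and the minimal half-height supplies $\mu_i^*$. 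Since the horizontal extent $u^{(z)}_i-l^{(z)}_i$ is fixed, $\mathrm{area}\bigl(P(L,\mu)\bigr) = 2\mu\,(u^{(z)}_i-l^{(z)}_i)$ is strictly increasing in $\mu$, so the minimum-\emph{area} vertical parallelogram containing $S$ is precisely the one of minimal half-height, and its solution yields the optimizer of Equation~\ref{eq:minmax2}. This establishes the first assertion.

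The ``repeated calls'' assertion then follows as in the analogous treatment of elementwise nonlinearities (the reduction of Equation~\ref{eq:soundness-1} through Equation~\ref{eq:minmax1}), together with Lemma~\ref{lemma:soundness-2}. Because $\odot$ and the translation $x\odot z - \Lambda\odot z$ act entrywise and $\hat Z$ is restricted to an axis-aligned hyperbox $H(b,\mu)$, the containment in Equation~\ref{eq:soundness-2} decouples coordinatewise: it holds iff for each $i$ the interval $[b_i-\mu_i,b_i+\mu_i]$ contains the residual set $\{(x_i-\Lambda_i)z_i : x_i\in[l^{(x)}_i,u^{(x)}_i],\ z_i\in[l^{(z)}_i,u^{(z)}_i]\}$, which is itself a closed interval (the continuous image of a connected box). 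Solving Equation~\ref{eq:minmax2} for coordinate $i$ returns the $\Lambda_i^*$ that minimizes the half-width of this interval, together with its midpoint $b_i^*$ and half-width $\mu_i^*$; stacking these over $i$ yields $\Lambda^*$ and the hyperbox $H(b^*,\mu^*)$, which is by construction the tightest hyperbox satisfying Equation~\ref{eq:soundness-2}. Lemma~\ref{lemma:soundness-2} then upgrades this to the sound containment $\{x\odot z : x\in H,\,z\in Z\}\subseteq(\Lambda^*\odot Z)\oplus H(b^*,\mu^*)$.

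Most of this is a change of viewpoint, so I expect the delicate points to be only the two noted above: the exactness of the coordinatewise projections and the faithful realization of every fibre of $S$, and the fact that $S$ need not be convex. When $0\in[l^{(z)}_i,u^{(z)}_i]$ the bounding lines $y=l^{(x)}_i z_i$ and $y=u^{(x)}_i z_i$ meet at the origin and $S$ pinches to a single point there, so the reduction — and the parallelogram-fitting routine it invokes in the next section — must be stated for general, possibly non-convex, planar sets $S$ rather than relying on $\mathrm{conv}(S)$ or on enumerating vertices of $S$.
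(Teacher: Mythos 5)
Your proposal is correct and follows essentially the same route as the paper's proof: reduce Equation~\ref{eq:minmax2} coordinatewise to a vertical-parallelogram fit over the interval bounds $[l^{(z)}_i,u^{(z)}_i]\times[l^{(x)}_i,u^{(x)}_i]$, assemble the per-coordinate optima into $\Lambda^*$ and the residual hyperbox $H(b^*,\mu^*)$, and invoke Lemma~\ref{lemma:soundness-2}. Your added observation that the single-coordinate projection of the zonotope is \emph{exactly} the interval (so the relaxation to intervals introduces no slack, which is what makes the "tightest" claim go through) is a point the paper's proof states only as a containment, but it does not change the argument.
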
{theorem}
In this sense, we may once again compute the vertical parallelogram fit for each coordinate $i$ to generate the scaling factor $\Lambda$ and residual hyperbox $Z'.$

\section{Vertical-Parallelogram fitting problem}\label{sec-vpfit}
\begin{figure*}[ht]
    \centering
    \includegraphics[width=0.34\textwidth]{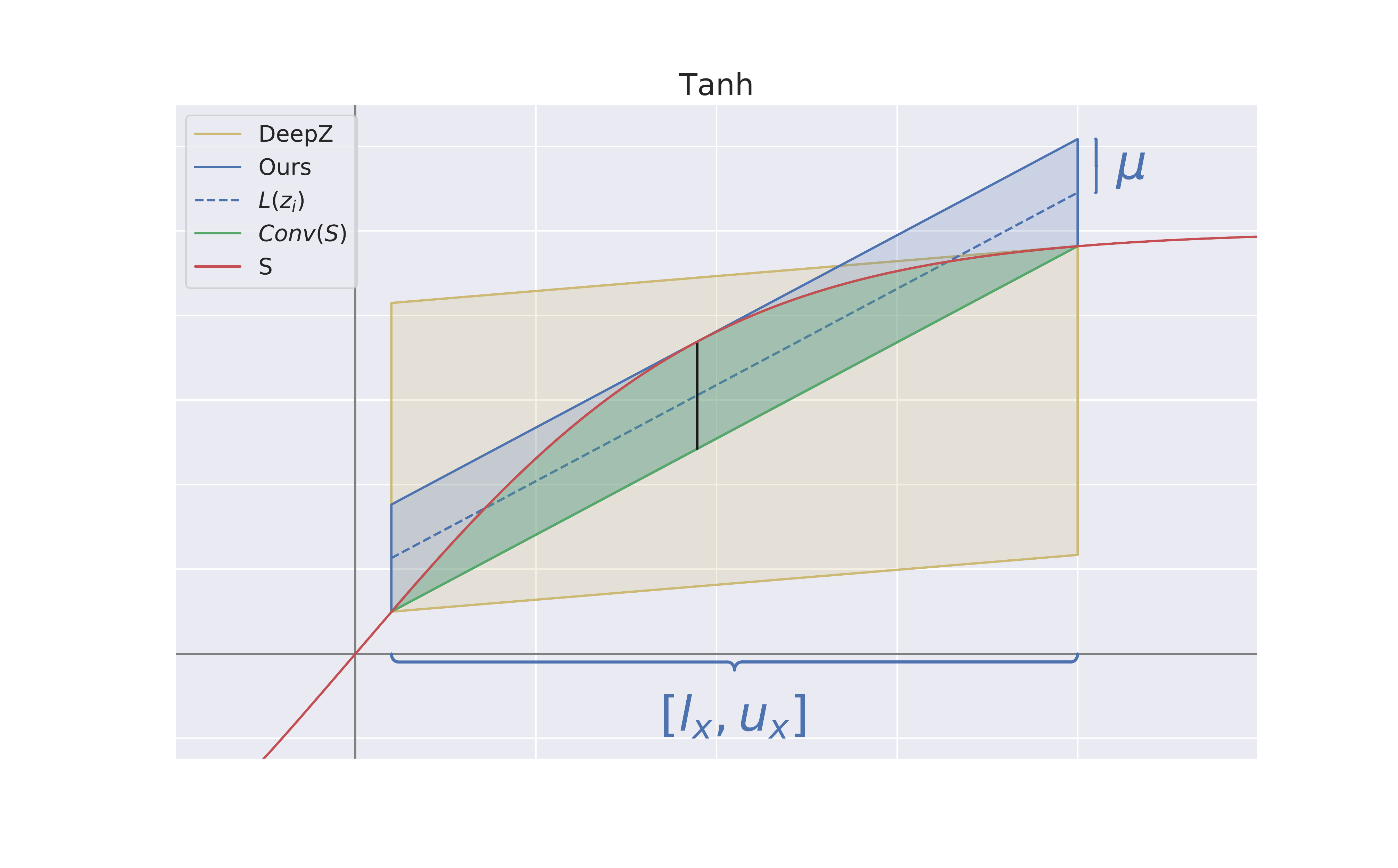}
    \includegraphics[width=0.34\textwidth]{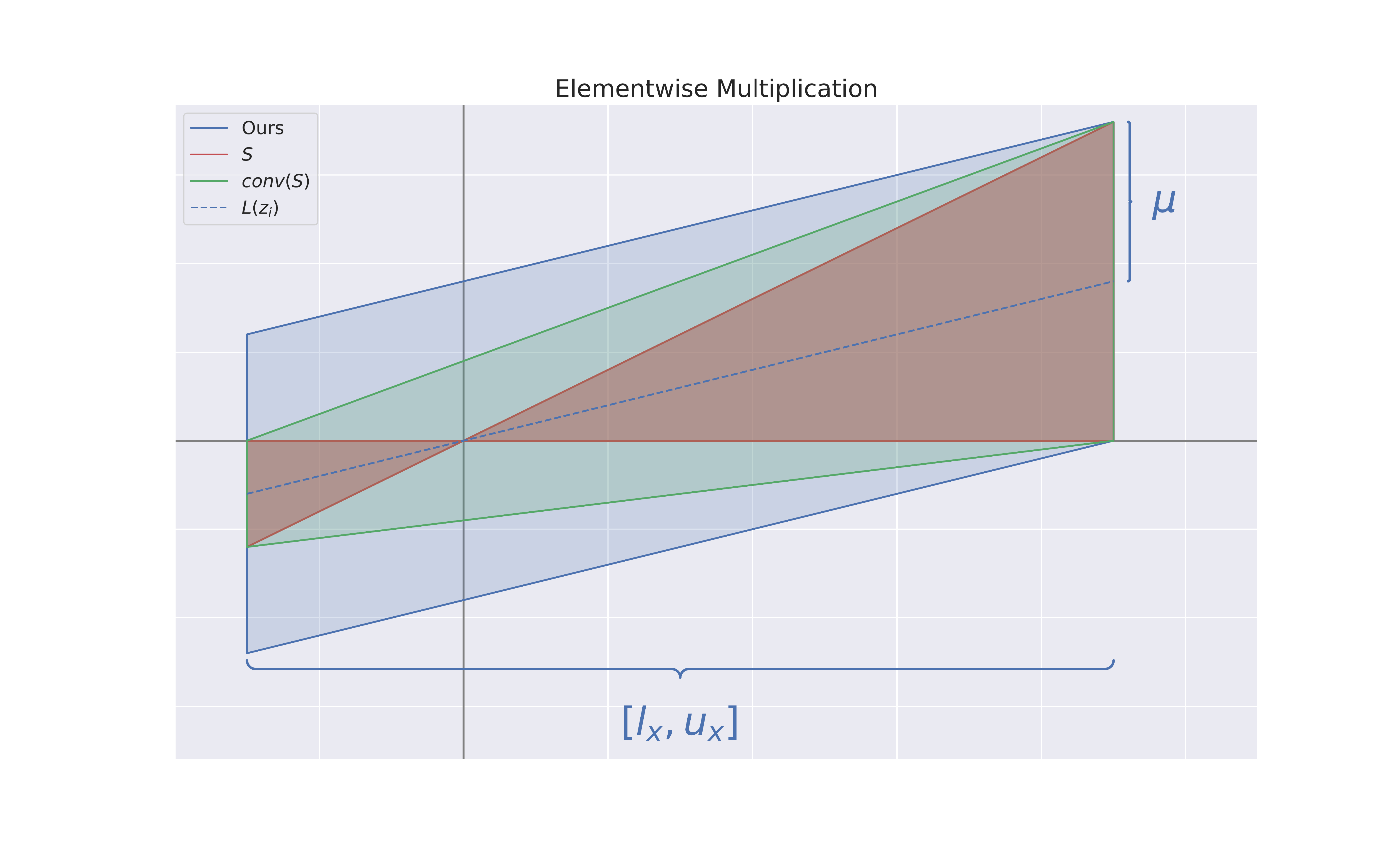}
    \includegraphics[width=0.24\textwidth]{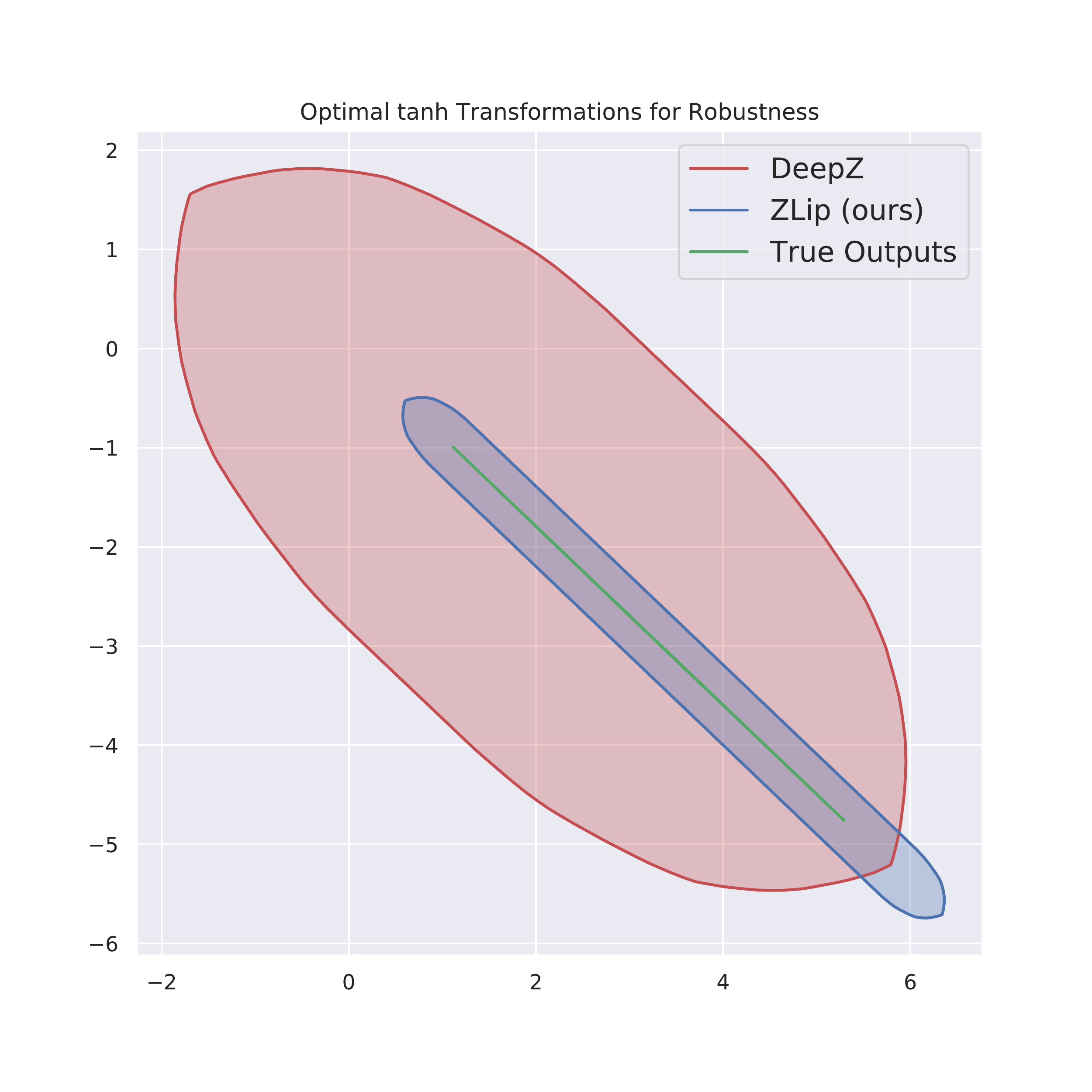}\\
    \caption{Examples of the vertical parallelogram fitting algorithm for the tanh (left) and elementwise multiplication (middle) operators. The sets $S$ are drawn in red, with their convex hulls in green. The optimal vertical parallelogram for each is drawn in blue. The bound yielded by DeepZ is in yellow for the tanh operator. For classifiers trained on the toy dataset, our improved tanh bounds yield tighter reachability analysis compared to prior work (right). The set of attainable outputs of a network for a specified region is plotted in green, DeepZ yields the set shaded in red, whereas our reachable set is shaded in blue.
    }\label{fig:tanh-elmul}
    \vspace{-1em}
\end{figure*}

In the cases of elementwise operators or elementwise multiplication by a hyperbox, we have reduced the problem of tightly fitting the residuals to the vertical parallelogram fitting problem. Here we describe our algorithm to solve this problem and illustrate its use on two examples. Prior work has optimally solved this problem for the ReLU nonlinearity, but is unnecessarily loose for differentiable nonlinearities. We provide full derivations for the ReLU, sigmoid, tanh, and absolute value operators in the appendix.

\paragraph{Algorithm} 
Consider some general 2-dimensional set $S$. We will walk through the major components of our algorithm and describe the necessary subroutines to fit a vertical parallelogram to $S$. Recall that vertical-parallelograms are parameterized by a scalar vertical side-length, $2\mu$, and an affine function $L(\cdot)$ which is parameterized by a slope $\lambda$ and intercept $b$.

First we compute the vertical-side length $\mu$. Since vertical parallelograms are convex, any vertical parallelogram containing the target set $S$ must contain its convex hull. Assuming that the set $S$ is bounded in the horizontal dimension by $[l_x, u_x]$, the convex hull of $S$ may be decomposed into a convex upper and lower hull, $h^+(x), h^-(x)$ such that 
\[\texttt{conv}(S) = \{(x, y) \mid x\in [l_x, u_x],\quad h^-(x) \leq y \leq h^+(x)\},\]
where the upper hull is concave and the lower hull is convex. The vertical deviation between these hulls, which we refer to as the altitude at $x$, is denoted by their difference $h^+(x)-h^-(x)$. The vertical side-length $2\mu$ is lower bounded by the altitude at $x$ for every $x\in [l_x, u_x]$. If the maximum of the altitude across $[l_x, u_x]$ is $2\mu^*$, attained at $x^*$, the vertical side-length of $P$ must satisfy $2\mu^* \leq 2\mu$. It turns out this is always attainable with equality, i.e. there always exists an affine function $L$ such that $P(L, \mu^*)$ contains $S$.

For maximum altitude $2\mu^*$ attained at $x^*$, we compute an affine function $L$ such that the shifted affine function $L(x)+\mu^*$ is greater than $h^+(x)$ for all $x\in [l_x,u_x]$ and vice versa for the lower convex hull, $L(x)-\mu^*\leq h^-(x)$. We first solve for the slope of $L$. The subgradient of a convex function $f$, denoted $\delta f(x)$, is the set of slopes of affine functions passing through the point $(x, f(x))$ that lower bound $f$. Thus, the slope $\lambda$ must lie in both the subgradient of $h^-(x)$ and $-h^+(x)$. The intersection of these subgradients is guaranteed to be nonempty as $x^*$ minimizes the convex function $h^-(x)-h^+(x)$, implying $0\in \delta(h^-(x^*)-h^+(x^*)).$ Any element in the intersection of these subgradients suffices as a choice for the slope of the fitting parallelogram. We arbitrarily choose such an element and denote it as $\lambda^*$. 

Finally we compute the intercept of the line $L$. For the bounds to be tight, the constraint that $L(x)+\mu^* \geq h^+(x)$ must be tight at $x^*$ and likewise for $h^-$. Hence the line $L$ must pass through the point $(x^*, \frac{h^+(x^*)+h^-(x^*)}{2})$. Provided with a slope $\lambda^*$, it is trivial to compute the intercept of a line passing through that point. This concludes the algorithm and yields the desired parallelogram attaining the tightest possible vertical deviation. 

The running time of this algorithm depends crucially on the structure of the set $S$. Computing the upper and lower convex hulls of $S$ may be challenging and algorithms to compute these functions need to be designed by hand. However, the sets we consider admit efficient algorithms to compute these functions. Multiple prior works have correctly computed the convex hull for the set defined by the ReLU operator, but have provided suboptimal bounds for differentiable nonlinearities. To illustrate its use, we will apply our algorithm to the tanh operator and the elementwise multiplication operator. 

\paragraph{Tanh example: } 
Consider the $\tanh$ function and the horizontal coordinate bounds $[l_x, u_x]$ where $l_x \geq 0$. In this case, we desire to fit a parallelogram to the set $\{(x, \tanh(x)) \mid x \in [l_x, u_x]\}$. The $\tanh$ function is concave for $x \geq 0$, so the convex hull of $S$ has an upper hull of $h^+(x)=\tanh(x)$. The lower hull, $h^-(x)$, is the secant line passing through the points $(l_x, \tanh(l_x))$ and $(u_x, \tanh(u_x)$, which has slope $\lambda = \frac{\tanh(u_x)-\tanh(l_x)}{u_x-l_x}$. The altitude is maximized when the slope of $\tanh(x)$ is equal to $\lambda$, which is attained at $x^*=\tanh^{-1}(\sqrt{1 - \lambda})$. This yields the maximal vertical deviation $2\mu^* = \tanh(x^*)-h^-(x^*)$. Since the lower hull is a secant line, the subgradients everywhere are $\lambda$, so the slope of $L(\cdot)$ must be $\lambda$. The intercept of the parallelogram's line $L(\cdot)$ must pass through the point $(x^*, \tanh(x^*)-\mu^*).$ This is illustrated graphically in figure \ref{fig:tanh-elmul} (left), where the set $S$ is drawn in red and its convex hull is green. The parallelogram yielded by our algorithm is in blue, whereas the parallelogram yielded by prior work is yellow \cite{Singh2018-tg}. 

\paragraph{Elementwise multiplication example:} 
Now consider the case of elementwise multiplication. Consider the $x$-interval with $l_x < 0 < u_x$ multiplied by a value $y$ in the range $[\alpha, \beta]$ for $ \alpha \geq 0$. The set $S=\{(x, x\cdot y) \mid l_x \leq x \leq u_x\quad \alpha \leq y \leq \beta\}$ is the union of the two triangles shown in red in Figure \ref{fig:tanh-elmul} (middle). The convex hull of $S$ is the trapezoid, in green in Figure \ref{fig:tanh-elmul}, and the upper hull is the line connecting the points $(l_x, \alpha l_x)$ and $(u_x, \alpha u_x)$. The lower hull is the line connecting the points $(l_x, \beta l_x)$ and $(u_x, \alpha u_x)$. The maximum altitude is attained at one of the endpoints: in the example in Figure \ref{fig:tanh-elmul} this is $u_x$ because $u_x>|l_x|$. The altitude is then $(\alpha-\beta)\cdot u_x$. The slope $\lambda$ must lie between the slopes of the directional derivatives of the upper and lower hulls, yielding an admissible range of $[\frac{\alpha u_x - \beta l_x}{u_x-l_x}, \frac{\beta u_x-\alpha l_x}{u_x-l_x}]$. We choose $\lambda$ to be the midpoint of this interval. The intercept of the center line may then be chosen such that it passes through the point $(u_x, \frac{\alpha+\beta}{2} u_x)$. We plot the resulting parallelogram in blue.

\section{Maximizing Norms over Zonotopes}
The final step to upper bounding the Lipschitz constant of a network is to compute a maximization of the $\norm{\cdot}_{\alpha^*}$ norm over a zonotope $Z$, which contains the set of all attainable vector-Jacobian products. While maximizing a convex norm over a convex set may be hard in general, it suffices to upper bound this value. We may always upper bound this norm efficiently by transforming $Z$ into the tightest containing hyperbox and computing the norm over this hyperbox. Any maximal-$\ell_p$ norm of a hyperbox is efficiently computable, so this technique is quite efficient. However as we typically consider the $\alpha$ to be the $\ell_\infty$ norm, we focus on techniques to maximize the dual $\ell_1$ norm specifically. First we show that this problem is equivalent to computing the Grothendieck problem, i.e. to compute the matrix norm $\norm{\cdot}_{\infty \to 1}.$
\begin{restatable}{theorem}{grothendieckThm}
The problem of computing the maximal $\ell_1$ norm of a zonotope is equivalent to the $\norm{\cdot}_{\infty \to 1}$ matrix norm: both problems are NP-hard in general. Additionally, any approximation algorithm with approximation ratio $\alpha$ for the Grothendieck problem will yield an approximation algorithm with ratio $\alpha$ for the zonotope $\ell_1$ maximization problem and vice versa.
\end{restatable}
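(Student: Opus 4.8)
The plan is to establish two linear-time reductions — one in each direction — between maximizing $\norm{\cdot}_1$ over a zonotope and computing the $\norm{\cdot}_{\infty\to1}$ matrix norm, each preserving the optimal value exactly, and then to transfer the known NP-hardness of the latter. Recall first that $\norm{M}_{\infty\to1}=\max\{x^TMy : x\in\{\pm1\}^d,\ y\in\{\pm1\}^m\}$, since a bilinear form over a cube is maximized at a pair of vertices; this is exactly the Grothendieck problem. One direction is then immediate: for any $M\in\R^{d\times m}$ the centered zonotope $Z(0,M)$ satisfies $\max_{x\in Z(0,M)}\norm{x}_1=\max_{\norm{y}_\infty\le1}\norm{My}_1=\norm{M}_{\infty\to1}$ straight from the definitions, so the zonotope problem is at least as hard as the Grothendieck problem.

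For the converse I start from the closed-form linear program over a zonotope recalled earlier, $\max_{x\in Z(c,E)}s^Tx=s^Tc+\norm{E^Ts}_1$. Writing $\norm{x}_1=\max_{\norm{s}_\infty\le1}s^Tx$ and exchanging the two maxima gives
\[\max_{x\in Z(c,E)}\norm{x}_1=\max_{\norm{s}_\infty\le1}\bigl(s^Tc+\norm{E^Ts}_1\bigr).\]
Since both the cube $\{\norm{s}_\infty\le1\}$ and the term $\norm{E^Ts}_1$ are invariant under $s\mapsto-s$, the right-hand side equals $\max_{\norm{s}_\infty\le1}\bigl(|s^Tc|+\norm{E^Ts}_1\bigr)=\max_{\norm{s}_\infty\le1}\norm{[c\,|\,E]^Ts}_1$, where $[c\,|\,E]\in\R^{d\times(m+1)}$ adjoins the center to $E$ as an extra generator column. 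Hence $\max_{x\in Z(c,E)}\norm{x}_1=\norm{[c\,|\,E]^T}_{\infty\to1}=\norm{[c\,|\,E]}_{\infty\to1}$, the last equality by the dual form of the matrix norm in Equation \ref{eq:matrix-norm}. This simultaneously absorbs the center and reduces the general zonotope problem to a Grothendieck instance. Both reductions preserve the optimum exactly, and feasible solutions convert in linear time in either direction — a sign vector $s$ yields the zonotope vertex $c+E\,\mathrm{sign}(E^Ts)$, after a global flip of $s$ when $s^Tc<0$, and conversely — so any multiplicative $\alpha$-approximation algorithm for one problem produces one with the same ratio for the other.

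Finally, it is classical that computing $\norm{M}_{\infty\to1}$ is NP-hard (it encodes MAX-CUT, e.g.\ via the bipartite-double-cover form of the cut-norm problem), and through the value-preserving reductions above this hardness transfers to the zonotope $\ell_1$-maximization problem, and vice versa. I expect the only non-routine step to be the center-absorption identity $\max_s\bigl(s^Tc+\norm{E^Ts}_1\bigr)=\max_s\norm{[c\,|\,E]^Ts}_1$ together with the transpose-invariance of the $(\infty\to1)$ norm; everything else is bookkeeping on top of the zonotope LP formula and a citation to the hardness of the Grothendieck problem.
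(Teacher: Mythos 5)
Your proposal is correct and establishes exactly the same reduction as the paper: the zonotope $Z(c,E)$ corresponds to the matrix $(c\|E)$ with the center adjoined as an extra generator column, and a matrix $M$ corresponds to the centered zonotope $Z(0,M)$, with the optimal values preserved exactly in both directions. The only (cosmetic) difference is that you verify the key identity $\max_{z\in Z(c,E)}\norm{z}_1=\norm{(c\|E)}_{\infty\to1}$ on the dual side, via the support-function formula $\max_{z\in Z(c,E)}s^Tz=s^Tc+\norm{E^Ts}_1$ and the transpose-invariance of $\norm{\cdot}_{\infty\to1}$, whereas the paper argues directly on the primal side by observing that the zonotope problem is the restriction of the matrix problem to $v_0=1$ and that an optimizer can always be taken with $v_0=1$ by sign symmetry.
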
{theorem}
The Grothendieck problem is well-studied and it has been shown that the semidefinite relaxation yields an approximation ratio of $<1.783$ \cite{Braverman2013-vg}. However, this relaxation may be quite slow. We present a novel result that states that the linear-programming relaxation for the $\ell_1$ zonotope norm maximization problem, and equivalently the Grothendieck problem, may be computed in linear time. 
\begin{restatable}{theorem}{lpGrothendieck}\label{thm:lp-relax}
For a zonotope, $Z(c,E)$, the linear programming relaxation of $\max_{z\in Z(c, E)} \norm{z}_1$ is computable in time $O(|E|)$ where $|E|$ denotes the number of elements in $E$.
\end{restatable}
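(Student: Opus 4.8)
The plan is to identify the linear-programming relaxation of $\max_{z\in Z(c,E)}\norm{z}_1$ with the bound our own machinery already produces --- namely, the result of pushing the zonotope through the elementwise absolute-value operator via Lemma~\ref{lemma:zono-containment} --- and then to observe that every ingredient of that construction is available in closed form. Writing $z=c+Ey$ with $\norm{y}_\infty\le 1$ and letting $E_i$ denote the $i$-th row of $E$, we have $\norm{z}_1=\sum_i|c_i+E_i^Ty|$, so the exact problem is a maximization of a convex function over a box whose only difficulty is the combinatorial choice of the per-coordinate signs. The linear-programming relaxation replaces, coordinatewise, each term $|c_i+E_i^Ty|$ by its convex-hull bound over the coordinate range of $Z$. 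Since the projection of $Z(c,E)$ onto coordinate $i$ is the interval $[l_i,u_i]=[\,c_i-\norm{E_i}_1,\ c_i+\norm{E_i}_1\,]$, that bound is the affine function $\ell_i(t)=\Lambda^*_i t+\kappa_i$ whose graph is the secant of $|\cdot|$ through $(l_i,|l_i|)$ and $(u_i,|u_i|)$ when $l_i<0<u_i$, and is simply $\ell_i(t)=\pm t$ otherwise; because $\ell_i(c_i+E_i^Ty)$ is affine in $y$, the relaxed problem
\[
\max_{\norm{y}_\infty\le 1}\ \sum_i \ell_i(c_i+E_i^Ty)\ =\ \max_{z\in Z(c,E)}\Big((\Lambda^*)^Tz+\sum_i\kappa_i\Big)
\]
is a genuine linear program. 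I would then record that this object is exactly what Lemma~\ref{lemma:zono-containment} delivers for $\Phi$ the elementwise absolute value: the optimal vertical-parallelogram fit of Section~\ref{sec-vpfit} for $|\cdot|$ on $[l_i,u_i]$ has slope $\Lambda^*_i$ equal to the secant slope and upper edge equal to the secant itself, so the residual hyperbox $H(b^*,\mu^*)$ satisfies $b^*_i+\mu^*_i=\kappa_i$, and $\max_{z'\in Z'}\vec{1}^Tz'$ over $Z'=(\Lambda^*\odot Z)\oplus H(b^*,\mu^*)$ equals the right-hand side displayed above.

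With the relaxation described this way, the running-time claim is just a tally of closed forms already established in the paper. Each $[l_i,u_i]$ is a coordinatewise lower/upper bound of the zonotope, i.e.\ an instance of $\max_{z\in Z(c,E)}a^Tz=a^Tc+\norm{E^Ta}_1$, and computing all $d$ of them together costs $O(|E|)$. Given the intervals, forming every pair $(\Lambda^*_i,\kappa_i)$ costs $O(d)$. The relaxed objective is linear over a zonotope, so its optimum is $(\Lambda^*)^Tc+\norm{E^T\Lambda^*}_1+\sum_i\kappa_i$; assembling the vector $E^T\Lambda^*$ and its $\ell_1$ norm costs $O(|E|)$, and the remaining inner product and sum cost $O(d)$. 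Adding these contributions gives the claimed $O(|E|)$ bound. The reduction behind the preceding theorem also shows that $\max_{z\in Z(0,M)}\norm{z}_1=\norm{M}_{\infty\to 1}$, so the same value and running time apply to the linear-programming relaxation of the Grothendieck problem.

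The step I expect to require the most care is not the bookkeeping but justifying that the construction above produces the relaxation \emph{exactly}, rather than merely some valid over-approximation of it. Concretely, one must check that for $l_i<0<u_i$ the upper edge of the optimal vertical parallelogram for $|\cdot|$ coincides with the secant --- equivalently, that the secant is the pointwise-smallest affine majorant of $|\cdot|$ on $[l_i,u_i]$, which follows from expressing each point of $[l_i,u_i]$ as a convex combination of its endpoints --- and that the non-straddling and degenerate ($l_i=u_i$) cases collapse so that $\ell_i$ remains the convex-hull bound there as well. A secondary check is that the argmin/argmax structure of Lemma~\ref{lemma:zono-containment} together with the vertical-parallelogram subroutine really do return this secant as the slope and offset, so that the equality in the display above holds on the nose rather than up to slack.
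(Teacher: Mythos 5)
Your proof is correct and follows essentially the same route as the paper: both identify the per-coordinate LP relaxation of $|z_i|$ with the secant (upper convex hull) of the absolute value over $[l_i,u_i]$ for straddling coordinates and with $\pm z_i$ otherwise, reduce the relaxed problem to a single linear program over the zonotope, and read off the $O(|E|)$ cost from the closed-form zonotope LP solution. The only difference is presentational: the paper writes down the explicit big-$M$ mixed-integer formulation and derives that its relaxation attains the secant value $\frac{(u_i+l_i)z_i-2u_il_i}{u_i-l_i}$, whereas you define the relaxation directly as the coordinatewise concave envelope and then verify the coincidence --- precisely the step you correctly flag as requiring care.
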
{theorem}
The proof follows from applying the vertical-parallelogram fitting algorithm to the absolute value operator and then solving a linear program over the resulting zonotope. 

\section{Experiments}

\begin{table*}[]

\centering
\caption{Evaluation of ZLip and Fast-Lip on generative models trained on the MNIST and CIFAR-10 datasets, evaluated over inputs of radius 0.05. Times are reported in seconds The column F/Z denotes the ratio of the estimate returned by Fast-Lip to our estimate. For larger networks, ZLip can yield upper bounds that are several orders of magnitude tighter than Fast-Lip.}\label{table:generator-table}
\begin{tabular}{@{}l|llll|l|ll|ll|l@{}}\midrule
                             & \multicolumn{5}{c|}{MNIST}                                                             & \multicolumn{5}{c}{CIFAR-10}                                                             \\ \midrule
\multicolumn{1}{c|}{Network} & \multicolumn{2}{c|}{ZLip}              & \multicolumn{2}{c}{Fast-Lip} &                & \multicolumn{1}{c}{ZLip} &      & \multicolumn{1}{c}{Fast-Lip} &        &                 \\ \midrule
                             & Value     & \multicolumn{1}{l|}{Time}  & Value          & Time        & F/Z            & Value                    & Time & Value                        & Time   & F/Z             \\ \cmidrule(l){2-11} 
VAESmall                     & 6.81\e{2} & \multicolumn{1}{l|}{0.831} & 6.30\e{3}      & 0.0017      & \textbf{10.29} & 4.07\e{3}                & 4.97 & 1.13\e{4}                    & 0.0029 & \textbf{2.79}   \\
VAEMed                       & 2.39\e{3} & \multicolumn{1}{l|}{1.22}  & 1.06\e{6}      & 0.0029      & \textbf{1024}  & 8.28\e{3}                & 5.99 & 1.06\e{6}                    & 0.0042 & \textbf{133.6}  \\
VAEBig                       & 6.56\e{3} & \multicolumn{1}{l|}{1.38}  & 5.71\e{7}      & 0.0042      & \textbf{46746} & 8.39\e{e}                & 5.41 & 1.74\e{7}                    & 0.0054 & \textbf{2227.8} \\
VAECNN                       & 8.75\e{2} & \multicolumn{1}{l|}{1.37}  & 3.58\e{4}      & 0.0028      & \textbf{47.75} & 5.97\e{3}                & 10.3 & 6.16\e{4}                    & 0.0031 & \textbf{10.38}  \\
FFGAN                        & 1.55\e{7} & \multicolumn{1}{l|}{0.455} & 2.95\e{8}      & 0.0044      & \textbf{52.99} & 1.24\e{7}                & 1.14 & 1.21\e{8}                    & 0.081  & \textbf{10.01}  \\
DCGAN                        & 4.34\e{5} & \multicolumn{1}{l|}{3.46}  & 2.24\e{7}      & 0.0056      & \textbf{55.33} & 2.11\e{6}                & 8.15 & 4.63\e{7}                    & 0.0127 & \textbf{31.19}  \\
VAETanh                      & 3.55\e{3} & \multicolumn{1}{l|}{2.17}  & 7.96\e{4}      & 0.0032      & \textbf{24.19} & 2.26\e{3}                & 3.90 & 2.97\e{5}                    & 0.0044 & \textbf{132.8}  \\
VC-Tanh                      & 2.40\e{3} & \multicolumn{1}{l|}{2.95}  & 7.71\e{4}      & 0.0031      & \textbf{37.25} & 5.76\e{3}                & 1.97 & 1.86\e{5}                    & 0.0033 & \textbf{32.47} 
\end{tabular}

\end{table*}
We highlight that our algorithm, which we refer to as ZLip, is specifically designed to provide Lipschitz estimates of networks with large output dimension. However, the approaches outlined above are applicable to scalar functions as well. As much of the literature focuses on classifiers, we first compare our approach on a binary classification task against other Lipschitz estimation techniques. Then we apply ZLip to generative models for MNIST and CIFAR-10. A full description of the experimental details and additional experiments on 
MNIST and CIFAR-10 classifiers are present in the supplementary. The code is publicly available at \href{https://github.com/revbucket/lipMIP}{https://github.com/revbucket/lipMIP}.

\paragraph{Toy Network Benchmarks:}

\begin{figure*}[ht]
    \centering
    \includegraphics[width=0.48\textwidth]{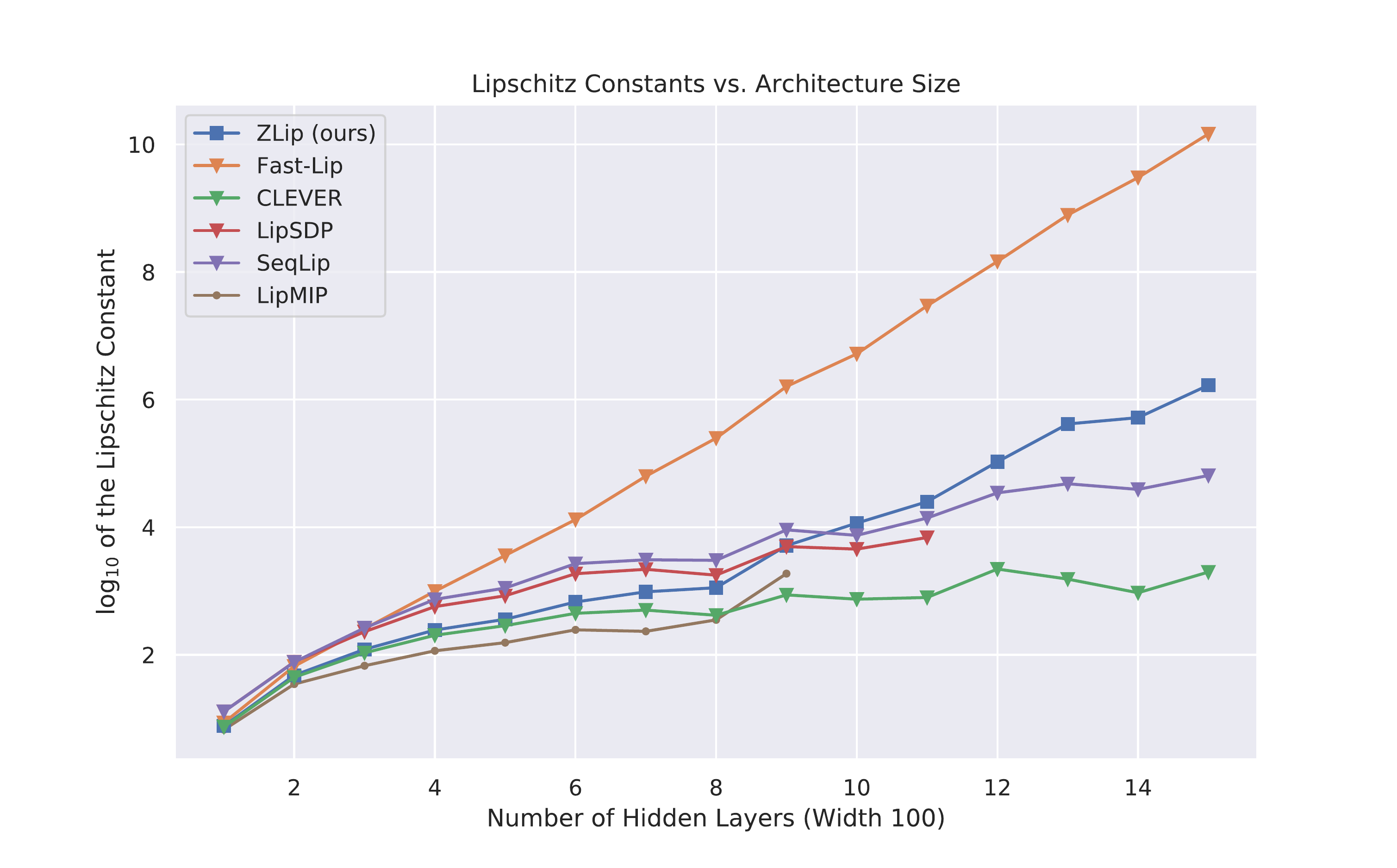} 
    \hfill
    \includegraphics[width=0.48\textwidth]{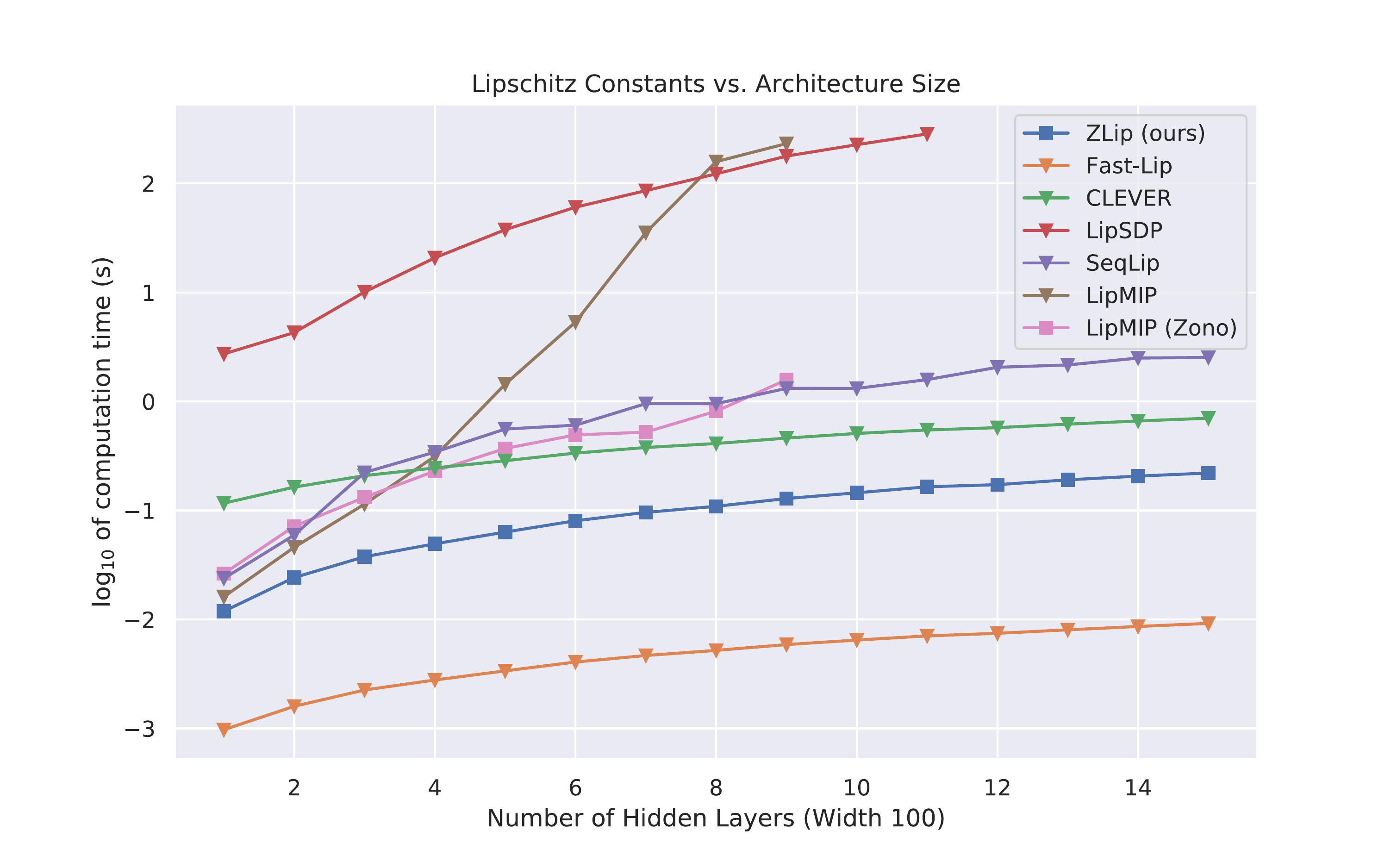}
    \caption{Comparisons of Lipschitz estimate (left) and running time (right) for various Lipschitz estimation techniques on networks trained on the Circle dataset. The horizontal axis is the number of hidden layers of the networks considered and the vertical axis is in log-scale. Our approach is scalable to larger networks while still providing reasonably tight bounds. Also note that using our approach within LipMIP dramatically improves the efficiency.\label{fig:toy-dataset}}
\end{figure*}
To fairly compare against existing Lipschitz estimation techniques, we present results on the 2-dimensional Circle dataset from \cite{Aziznejad2020-aq}, where the binary classification is resolved by taking the sign of the output. We consider the  $L^{(\infty, |\cdot|)}(f, \mathcal{X})$ Lipschitz constant for fully-connected networks $f$ with input dimension 2 and a varying amount of  layers of width 100 and the ReLU nonlinearity. We report the average Lipschitz estimate and compute time for the following techniques: Fast-Lip, LipSDP, SeqLip, CLEVER, and LipMIP. Fast-Lip and LipSDP provide provable upper bounds \cite{Weng2018-gr, Fazlyab2019-im}. SeqLip and CLEVER provide heuristic estimates and LipMIP computes this quantity exactly \cite{Virmaux2018-ti, Weng2018-lf, Jordan2020-tq}. LipMIP leverages interval analysis as a first step, so we also consider a modified version that instead uses the layerwise approximations yielded by ZLip.

In Figure \ref{fig:toy-dataset}, we plot the reported Lipschitz estimate and runtime of these other techniques applied on input regions that are random hyperboxes of $\ell_\infty$ radius $0.1$ centered at elements in the test set. These plots can demonstrate where each technique lies with respect to the efficiency-accuracy tradeoff. In varying the architecture size, we observe that ZLip yields the tightest provable upper bounds for small networks, and only begins to provide looser bounds than LipSDP at 9 hidden layers, at which point LipSDP is three orders of magnitude slower than ZLip. Additionally, using ZLip in place of interval analysis in LipMIP can provide speedups of up to 100x while preserving the exactness of Lipschitz computation. 

\paragraph{Generative Models:}

We now scale our approach to generative models for the MNIST and CIFAR-10 datasets. We train multiple VAEs and GANs using fully-connected and convolutional layers and the ReLU and tanh nonlinearities \cite{Kingma2013-en, Radford2015-hi}. To evaluate over VAEs, we consider input sets $\mathcal{X}$ that are $\ell_\infty$ balls surrounding the encodings of images taken from the test set. For GAN evaluation, we consider $\ell_\infty$ balls surrounding random inputs from the training distribution and evaluate $L^{(\infty, 1)}(f, \mathcal{X})$ of the generator. The only other nontrivial Lipschitz estimation approach that tolerates vector-valued networks and is able to scale to networks of this size is Fast-Lip. Full experimental details are presented in the Appendix, as well as experiments with input sets of different radii.

Table \ref{table:generator-table} displays results for random inputs with $\ell_\infty$ radius of 0.05. We report the average Lipschitz bound and runtime, as well as the ratio of the Fast-Lip value to the ZLip value, denoted F/Z. As the network size scales, we notice that the values reported by both ZLip and Fast-Lip increase. This is likely due to both the true Lipschitz constant of the network increasing as well as the bounds becoming looser. However the bounds provided by ZLip are comparatively much tighter than Fast-Lip for larger networks. We attribute this to the fact that the interval analysis of Fast-Lip introduces error in both the affine and nonlinear layers, whereas our approach only introduces error in the nonlinear layers. While an increase in network size accompanies an increase in runtime for both techniques and ZLip is indeed slower than Fast-Lip, ZLip remains tractable for even the largest networks we consider. 
\section{Conclusion} 
We have presented a principled way to efficiently upper bound the Lipschitz constant of neural networks with large output dimension. Our technique improves upon prior works for zonotope approximations of neural networks and is applicable to the operators required by the backpropagation algorithm. This approach yields tighter provable Lipschitz bounds for classifiers and is able to scale effectively to familiar generative models for the MNIST and CIFAR10 datasets, yielding improvements of up to three orders of magnitude for Lipschitz estimation of these networks. 

\paragraph{Acknowledgements:} This research has been supported by NSF Grants CCF 1763702,1934932, AF 1901292,
2008710, 2019844 research gifts by Western Digital, WNCG IAP, computing resources from TACC and the Archie Straiton Fellowship. 

\bibliography{main}
\bibliographystyle{icml2021}
\newpage


\onecolumn
\appendix
\section{Proofs of Lemmas and Theorems} 

\soundnessThm*
\begin{proof} 
Suppose $\mathcal{Z}_i, \hat{\mathcal{Z}}_i, \mathcal{J}_i, \mathcal{Y}_i, \hat{\mathcal{Y}}_i$ satisfy the containments is equation \ref{eq:setrec-inputs}-\ref{eq:setrec-jac}. Now consider any $x\in\mathcal{X}$ and $u\in B_{\beta^*}$. The proof follows from repeated applications of the following statement: for any function $g$, if $\mathcal{A} \subseteq \mathcal{B}$, then $g(\mathcal{A}) \subseteq g(\mathcal{B})$. We iteratively apply this statement to the forward recursion to see that $\hat{Z}_i(x) \in \hat{\mathcal{Z}}_i$ for all $i$, and similarly for $Z_i(x)\in \mathcal{Z}_i$. From equation \ref{eq:setrec-jac}, $J_{i+1}^T(x)\in  \mathcal{J}_{i+1}$ for all $i$. We may now perform the backward recursion to see that $Y_i(x,u) \in \mathcal{Y}_{i}$ and similarly for $\hat{Y}_i(x,u), \hat{\mathcal{Y}}_i.$ Repeating this for all $i$ yields the desired result. 
\end{proof}

\zonoElwise*
\begin{proof} 
By assumption, for every $z\in Z$, there exists a $\hat{z}\in\hat{Z}$ such that $\Phi(z)-\Lambda \odot z = \hat{z}.$ This implies that, $\Phi(z)=(\Lambda \odot z) + \hat{z}$. By definition, 
\[(\Lambda \odot Z) \oplus \hat{Z} := \{\Lambda \odot z + \hat{z}\mid z\in Z\quad \hat{z}\in \hat{Z}\},\]
so $\Phi(z)\in (\Lambda \odot Z) \oplus \hat{Z}$ for every $z\in Z$.
\end{proof}

\setThmOne*
\begin{proof} 
Suppose that $\{\Lambda_i, b_i, \mu_i\}_{i=1}^d$ is the set of solutions to the vertical parallelogram fitting problem for each set $S_i=\{(z_i, \phi(z_i) \mid z_i\in[l_i, u_i]\}$. Since the coordinate-wise bounds $l_i, u_i$ are chosen such that $l \leq z \leq u$ for all $z\in Z$, the containment holds: 
\[\{\Phi(z) - \Lambda \odot z -b \mid z\in\mathcal{Z}\} \subseteq \{\Phi(z) - \Lambda \odot z -b \mid l \leq z \leq u\}.\] 
By definition of solutions to the vertical parallelogram fitting problem, the set of vectors $\{\Phi(z) - \Lambda \odot z -b \mid l \leq z \leq u\}$ is contained in the hyperbox $H(0, \mu)$. Adding $b$ to each element of each set, we see that $\{\Phi(z) - \Lambda \odot z \mid l\leq z \leq u\}$ is contained in the hyperbox $H(b, \mu)$, thus satisfying the assumptions of Lemma \ref{lemma:zono-containment}.
\end{proof}

\zonoSetwise*
\begin{proof} 
By assumption, for every $z\in Z$ and every $x\in H$, there exists a $\hat{z}\in\hat{Z}$ such that $x\odot z - \Lambda \odot z= \hat{z}$. This implies that, $x \odot z=(\Lambda \odot z) + \hat{z}$. And by definition 
\[(\Lambda \odot Z) \oplus \hat{Z} := \{\Lambda \odot z + \hat{z}\mid z\in Z\quad \hat{z}\in \hat{Z}\},\]
so $x \odot z \in (\Lambda \odot Z) \oplus \hat{Z}$ for every $z\in Z$.
\end{proof} 

\setThmTwo*
\begin{proof}
Suppose that $\{(\Lambda_i, b_i, \mu_i)\}_{i=1}^d$ is the set of solutions to the vertical parallelogram fitting problem for each set $S_i=\{(z_i, x_i\odot z_i) \mid l^{(z)}_i \leq z_i \leq u^{(z)}_i, \quad l^{(x)}_i \leq x_i \leq u^{(x)}_i \}$. Let $H$ be the hyperbox with lower and upper-bounds denoted by $l^{(x)}, u^{(x)}$. Since the coordinate-wise bounds $l_i, u_i$ are chosen such that $l^{(z)} \leq z \leq u^{(z)}$ for all $z\in Z$, the containment holds: 
\[\{x \odot z - \Lambda \odot z -b \mid z\in\mathcal{Z}\quad x\in H\} \subseteq \{\Phi(z) - \Lambda \odot z -b \mid l^{(z)} \leq z \leq u^{(z)} \quad x\in\mathcal{H}\}.\] 
By definition of solutions to the vertical parallelogram fitting problem, the set of vectors $\{x\odot (z) - \Lambda \odot z -b \mid l^{(z)} \leq z \leq u^{(z)}\quad x\in H\}$ is contained in the hyperbox $H(0, \mu)$. Adding $b$ to each element of each set, we see that $\{x\odot z - \Lambda \odot z \mid l\leq z \leq u\quad x\in H\}$ is contained in the hyperbox $H(b, \mu)$, thus satisfying the assumptions of Lemma \ref{lemma:soundness-2}.
\end{proof} 

\grothendieckThm*
\begin{proof} 
We prove this via a strict reduction in showing that any instance of one problem may be converted into an instance of the other and will keep the same optimal value. To do this, we first note that for any matrix $M$, with $(0||M)$ denoting the zero-column prepended to the columns of $M$, that $\norm{M}_{\infty\to 1}=\norm{(0||M)}_{\infty \to 1}$. This follows since 
\[\norm{(0||M)}_{\infty\to1} = \max_{v\in B_{\infty}}\norm{(0||M)v}_1 = max_{u\in B_{\infty}}\norm{Mu}=\norm{M}_{\infty\to 1}.\]

Next, it suffices to show that 
\begin{equation}\label{eq:strict-reduction}
    \max\limits_{z\in Z(c,E)} \norm{z}_1 = \norm{(c||E)}_{\infty\to 1},
\end{equation}
for if this were true, certainly any zonotope could be reduced to a matrix-norm maximization problem, and any matrix norm problem could first prepend the zero column to the matrix and be reduced to a zonotope norm-maximization problem. Any $\alpha$-approximation algorithm for one problem could provide an $\alpha$-approximation for any instance of the other via this reduction. 

First we show that $\max\limits_{z\in Z(c,E)} \norm{z}_1 \leq \norm{(c||E)}_{\infty\to 1}$. As the right-hand-side may be written 
\[ \norm{(c||E)}_{\infty\to 1} = \max\limits_{|v_0|\leq 1} \max\limits_{v\in B_\infty} \norm{v_0\cdot c + Ev}_1\]
and whereas the left-hand side of Equation \ref{eq:strict-reduction} is the same optimization with $v_0$ restricted to $1$. Therefore the $(\leq)$ direction of Equation \ref{eq:strict-reduction} holds. For the other direction, consider any integral solution to the RHS, 
\[ (v_0^*, v^*) \in \argmax\limits_{|v_0|\leq 1, v\in B_\infty} \norm{v_0\cdot c + Ev}_1.\] 
Without loss of generality, $v_0^*$ may be chosen to be 1, and the point $(c+Ev^*)$ is in $Z(c,E)$.  Hence there's a point in $Z(c,E)$ with $\ell_1$ norm at least that of $\norm{(c||E)}_{\infty\to 1}$, thus proving the $(\geq)$ direction of equality \ref{eq:strict-reduction}. 
\end{proof} 

\lpGrothendieck*
\begin{proof}
First we write down the Linear-programming relaxation of the zonotope-norm maximization problem and then relate this to the mapping of the zonotope through the absolute value operator, by our vertical-parallelogram fitting procedure. The final result follows from linear programs being efficiently solvable over zonotopes. 

Consider some zonotope $Z(c, E) \subseteq \R^d$ which has coordinate-wise upper and lower bounds $[l_i, u_i]$ for every $i\in [d].$ We partition the coordinates into three sets of indices: $S^-, S^+, S$ such that $S^-:=\{i \mid u_i \leq 0\}$, $S^+:=\{i \mid l_i >0\}$ and $S$ is the set of indices not in either $S^-$ or $S^+$. We may write down the familiar mixed-integer programming relaxation for the absolute value operator by introducing $|S|$ continuous variables, $\{t_i\}_{i\in S}$, and $d$ integer variables $\{a_i\}_{i\in S}$, where $t_i\in \R$ and $a_i\in \{0,1\}$:
\begin{align}
    \max &\sum_{i\in S} t_i -\sum_{i\in S^-} z_i  +\sum_{i\in S^+} z_i\\
    t_i &\geq z_i \\
    t_i &\geq -z_i \\ 
    t_i &\leq -z_i + 2\cdot u_i \cdot a_i \label{eq:mip-1}\\ 
    t_i &\leq z_i - 2\cdot l_i \cdot (1-a_i) \label{eq:mip-2}\\
    a_i &\in \{0, 1\} \\
    z&\in Z(c, E)
\end{align}
Where the constraints enforce that $t_i=|z_i|$. The first two constraints require that $t_i\geq |z_i|$. To show $t_i\leq |z_i|$, we proceed by cases. When $z_i>0$, then \ref{eq:mip-1} implies that $a_i=1$, for otherwise $t_i < 0$ contradicting the first constraint. This causes \ref{eq:mip-2} to imply $t_i\leq z_i$. When $z_i <0$, \ref{eq:mip-2}, $a_i=0$, for otherwise \ref{eq:mip-1} again implies that $t_i<0$. This causes \ref{eq:mip-1} to imply $t_i\leq -z_i$. When $z_i=0$, either case can hold and $t_i=0$. The linear programming relaxation lets $a_i$ be in the range $[0,1]$ instead of $\{0,1\}$. 

For any fixed $z_i$, we can compute the maximum value of $t_i$ under this relaxation, which is a function of the now-continuous variable, $a_i$. By setting the upper bounds to equality, the optimal value of $a_i$ is $a_i=\frac{z_i-l_i}{u_i-l_i}$ and $t$ is then upper bounded by 
\[t_i \leq \frac{-z_i + 2u_i\cdot (z_i-l_i)}{u_i-l_i}\]. We observe that this is an equivalent relaxation to the upper-hull provided by the absolute value operator and our vertical-parallelogram fitting procedure (next section). This allows us to rewrite the optimization above as 
\[\max_{z \in Z(c,E)} \sum_{i\in S} \frac{-z_i + 2u_i\cdot (z_i-l_i)}{u_i-l_i} - \sum_{i\in S^-} z_i + \sum_{i\in S^+} z_i
\]
which we notice is a linear program over a zonotope. The objective vector may be developed in $O(d)$ time, and linear programs may be solvable over zonotopes in $O(|E|)$ time. 
\end{proof}

\newpage
\section{Pseudocode}
\begin{algorithm}
  \caption{ZLip}
  \begin{algorithmic}[0]
    \Require{$L$-layer feedforward neural network $f$, input set $\mathcal{X}$, norm $\beta$}
    \Ensure{Zonotope $\mathcal{Y}_0 \supseteq \{\nabla_xf(x)^Tv \mid x\in\mathcal{X},\quad v\in B_{\beta^*}\}$}
    \Statex
    \Function{ZLip}{$f, \mathcal{X}, \beta$}
      \Let{$\mathcal{Z}_0$}{\texttt{Zonotope}$(\mathcal{X})$} \Comment{Cast input set to zonotope} 
      \For{$i \gets 1 \textrm{ to } L$} \Comment{Forward pass (e.g., DeepZ)}
        \Let{$\hat{\mathcal{Z}}_i$}{\texttt{map\_affine}$(W_i, b_i, \mathcal{Z}{i-1})$} 
        \Let{$\mathcal{Z}_i$}{\texttt{map\_nonlin}$(\sigma, \hat{\mathcal{Z}}_i)$}
        \Let{$\mathcal{J}_i$}{\texttt{elementwise\_jacobian}$(\sigma, \mathcal{Z}_i)$} \Comment{Gradient range for $\nabla_z\sigma(\mathcal{Z}_i)$}
      \EndFor 
    \Let{$\hat{\mathcal{Y}}_L$}{\texttt{Zonotope}$(\mathcal{B}_{\beta^*})$}  \Comment{Cast dual ball to zonotope}
    \For{$i \gets L \textrm{ to } 1$}\Comment{Backward pass}
        \Let{$\mathcal{Y}_{i-1}$}{\texttt{map\_affine}$(W_i^T, 0, \hat{\mathcal{Y}}_i)$}
        \Let{$\hat{\mathcal{Y}}_{i-1}$}{\texttt{elementwise\_mul}$(\mathcal{J}_i$, $\mathcal{Y}_{i-1})$}
        \EndFor 
    \Let{$\hat{\mathcal{Y}}_0$}{\texttt{map\_affine}($W_0^T, 0, \hat{\mathcal{Y}}_0)$}
    
    \State \Return{$\hat{\mathcal{Y}}_0$} 
    \EndFunction
  \end{algorithmic}
\end{algorithm}

\begin{algorithm}
\caption{Vertical Parallelogram Fitting}
\begin{algorithmic}[0]
    \Require{Function $\sigma:\mathbb{R}\to\mathcal{P}(\mathbb{R})$, and interval $[c-|E|, c+|E|]$}
    \Ensure{Slope $\Lambda^*$, Altitude $\mu^*$, center $b^*$}
  \Function{\texttt{VP\_Fit}}{$\sigma, c, E$}
  \Let{$\mathcal{I}$}{$c\pm|E|$}
  \Let{S}{$\{(x, \sigma(x))\; \mid \; x\in\mathcal{I}\}$} 
  \Let{$h^-, h^+$}{\texttt{conv\_hull}$(S)$} \Comment{Possibly hard, depends on $\sigma$}
  \Let{$x^*$}{arg$\max_{x\in \mathcal{I}} h^+(x)-h^-(x)$}
  \Let{$\mu^*$}{$h^+(x^*)-h^-(x^*)$} \Comment{Altitude} 
  \Let{$\Lambda^*$}{$\delta(h^-(x^*)) \cap \delta(-h^+(x^*))$} \Comment{Slope of parallogram's non-vert side}
  \Let{$b^*$}{$\frac{1}{2} \cdot (h^+(x^*)+h^-(x^*))$} \Comment{Intercept}
  \Statex \Return{$\Lambda^*, \mu^*, b^*$}
  \EndFunction
\end{algorithmic}
\end{algorithm}

\begin{algorithm}
  \caption{Auxiliary Functions}
  \begin{algorithmic}[0]
    \Function{\texttt{map\_nonlin}}{$\sigma, Z(c,E)$} \Comment{$Z(c,E)\subseteq \mathbb{R}^d$, $\sigma$ is elementwise}
    \For{$i \gets 1 \textrm{ to } d$}
    \Let{$\Lambda_i, b^*_i, \mu^*_i$ }{\texttt{VP\_Fit}$(\sigma_i, c_i, E_i^T)$}
    \EndFor
    \State \Return{$Z(b^*, \texttt{diag}(\mu^*)) \oplus(\Lambda \odot Z(c,E))$}
    \EndFunction
    \Statex
  \Function{\texttt{elementwise\_mul}}{$H(l,u), Z(c,E)$} 
  \For{$i\gets 1\textrm{ to } d$}
    \Let{$\Lambda_i, b^*_i, \mu^*_i$}{\texttt{VP\_Fit}$([l_i, u_i], c_i, E_i^T)$} \Comment{Overloading \texttt{VP\_Fit} signature}
  \EndFor
  \State \Return{$Z(b^*, \texttt{diag}(\mu^*)) \oplus(\Lambda \odot Z(c,E)) $}
  \EndFunction
  \end{algorithmic}
\end{algorithm}

\newpage

\section{Detailed Derivations for Vertical Parallelogram Fitting} 
We recall the algorithm from Section \ref{sec-vpfit} for fitting a vertical parallelogram to a 2-dimensional set $S$. The first step was to compute the upper and lower convex hulls of $S$. For sets of the form $\{(x, f(x)) \mid x\in [l, u]\}$ for some differentiable function, this is equivalent to the biconjugate and the biconjugate of the negation of $f$. In this case, there exists a simple algorithm to yield the upper-convex hull. The lower-convex hull may be found the same way, but for the set $\{(x, -f(x))\}$. Observe that if $f$ is convex over $[l,u]$, then the upper convex hull is the secant line between the endpoints $(l, f(l))$ and $(u, f(u))$ and the lower-convex hull is $f(x)$; vice versa for concave functions. For functions that are neither convex or concave over $[l,u]$, the upper convex hull may be piecewise continuous, alternating between secant-line segments and $f$. For function $f$, let the secant line of $f$ between $x_1$ and $x_2$ be denoted as $Sec^f(x_1,x_2)$.

We take inspiration from the gift-wrapping procedure for finding convex hulls of finite 2-dimensional point sets. In gift-wrapping, the idea is to find the left-most point in the set and sweep a ray clockwise until an intersection with another point in the set is found. The sweep continues, with ray now starting at the newly intersected point until the left-most point is intersected again. Our procedure sweeps performs a sweep over rays of decreasing slope, noting that any intersecting point must lie on the set $S$ and thus the ray is a secant line. Hence, the key subroutine to find the upper hull is to solve a maximization over slopes of secant lines. For a fixed $x_0$, the slope of the secant line $Sec^f(x_0, x)$ is $\frac{f(x)-f(x_0)}{x-x_0}$, and the maximization we seek to solve is a constrained variant of this,
\begin{equation}\label{eq:sec-slope}
\max\limits_{x\in [x_0, u]} \frac{f(x)-f(x_0)}{x-x_0}.
\end{equation}
When $f$ is differentiable, then we can differentiate the above objective and set to zero and solve for $x$ in the equality 
\begin{equation}\label{eq:giftwrap}
    f'(x) = \frac{f(x) - f(x_i)}{x-x_i}
\end{equation}
This procedure may be repeated until the max is attained at an $x \geq u_i$, for which the final secant line spans between $(x_i, f(x_i))$ and $(u, f(u))$. 

Once piecewise forms for the convex upper and lower hulls are formed, the maximal altitude can be computed by maximizing the piecewise function $h^+(x)-h^-(x)$. The proper slope for the tightest fitting vertical parallelogram is attained by considering an element in the intersection of subgradients of $h^-$ and $-h^+$ at their maximal altitude.

\subsection{Sigmoid/Tanh} 

\begin{figure*}[ht]
    \centering
    \includegraphics[width=0.32\textwidth]{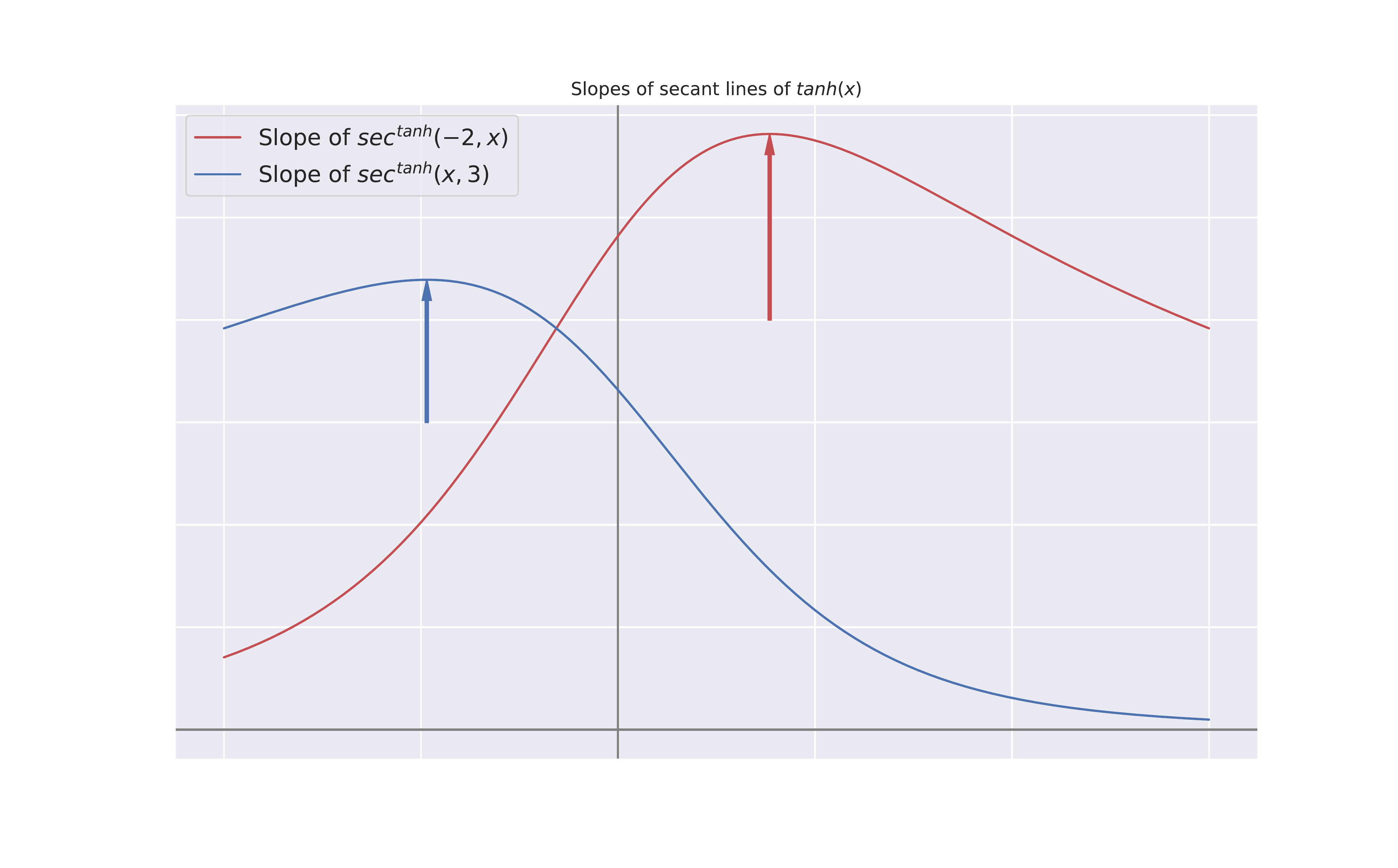}
    \includegraphics[width=0.32\textwidth]{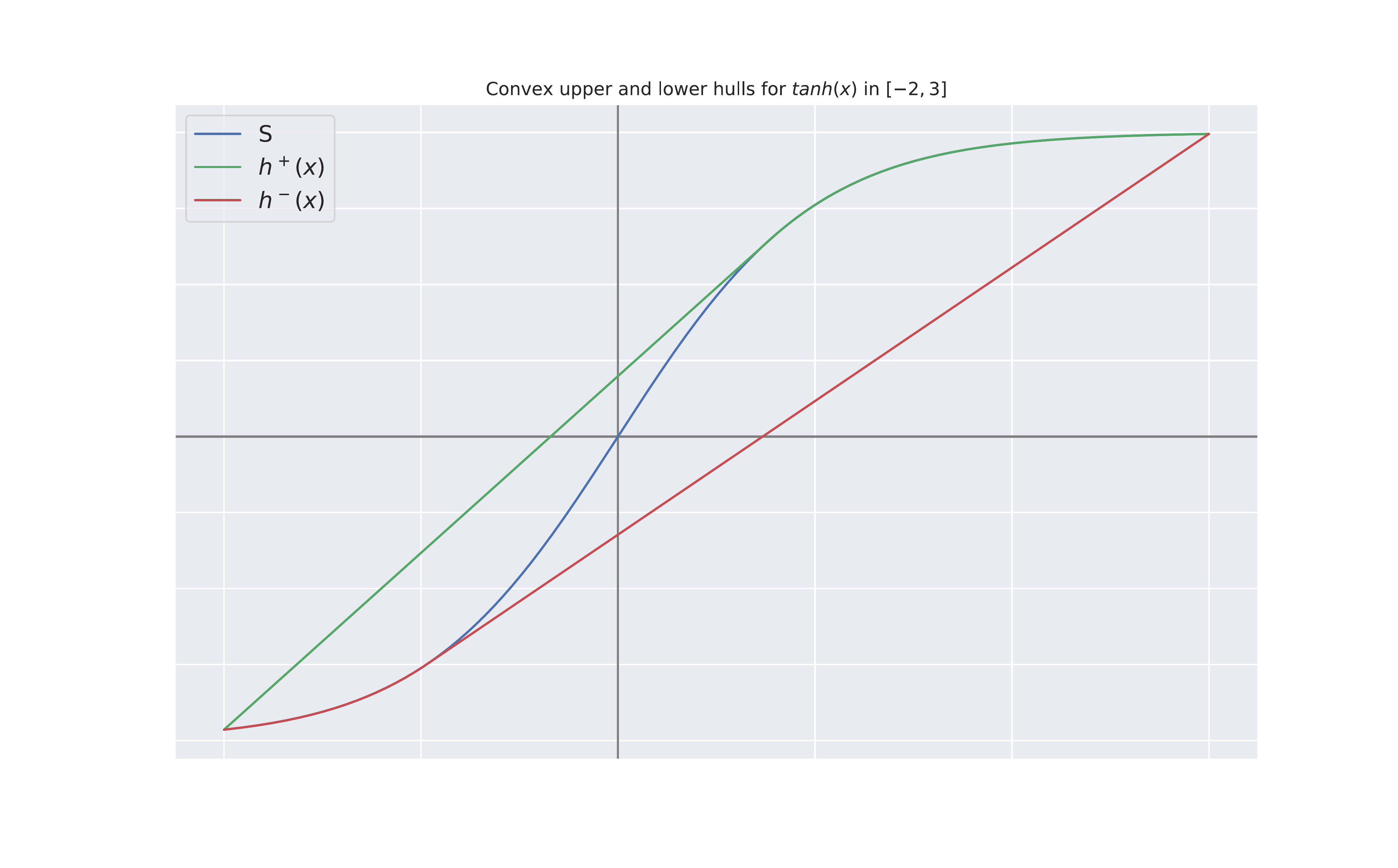}
    \includegraphics[width=0.32\textwidth]{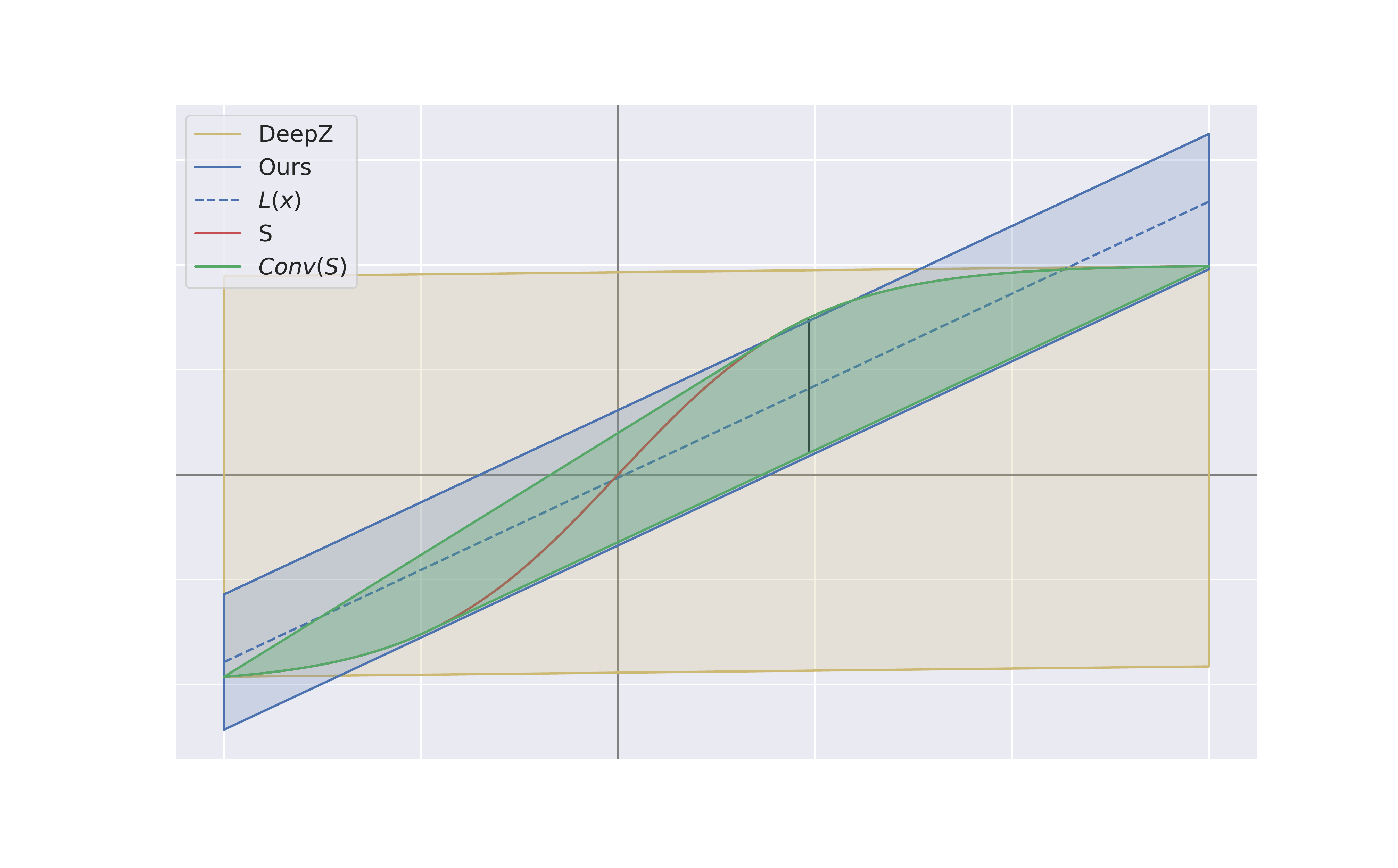}\\
    \vspace{-1em}
    \caption{Procedure for computing the tightest fitting vertical-parallelogram to the set $S=\{(x, tanh(x)) \mid x\in [-2, 3]\}$. (Left) We plot the slope of the secant-line from $-2$ to $x$ in red and the secant line from $x$ to 3 in blue, where the maximum is marked with a vertical hash. (Middle) We plot the upper and lower convex hulls for $S$. (Right) We plot the convex hull of $S$ and the parallelogram we produce versus that of prior work.}
\end{figure*} 
We demonstrate the above procedure for finding convex upper and lower hulls of sets $\{(x, f(x)\mid x\in [l,u]\}$ where $f$ is $S$-shaped like sigmoid and tanh. We say a function $f$ is $S$-shaped if it is monotonically increasing and there exists an $x'$ such that $f$ is concave for all $x\geq x'$. We break this into cases, based on the values of $[l,u]$. If $f$ is either convex or concave over the entire interval $[l,u]$, then the upper hull is either the secant line or $f$ respectively, and vice versa for the lower hull. In this case, the maximum altitude is attained at the $x$ where $f'(x)$ is equal to the slope of this secant line.

For cases where $f$ is both convex and concave for portions of $[l,u]$, then we proceed with the iterative giftwrap procedure, starting at $x_0=l$. For monotonically increasing tanh and sigmoid functions, this means that $l < 0 < u$ and we seek to find a point in the concave ($x \geq 0$) portion of $f$ such that Equation \ref{eq:giftwrap} is satisfied. Since $f$ is concave for $x\geq 0$, the function 
\[ f'(x) - \frac{f(x)-f(l)}{x-l}\] 
only has one zero for $x \geq 0$, which may be found numerically\footnote{We keep track of the numerical error and ensure that the vertical height of the parallelogram accounts for this.}. Letting $x^*$ be such an $x$, we have that the secant-line is given by $y=\frac{f(x^*)-f(l)}{x^*-l}(x-l) + f(l)$ and the upper convex hull for is 
\[
h^+(x)=\begin{cases}
         \frac{f(x^*)-f(l)}{x^*-l}(x-l) + f(l) \quad &\text{if} \, x \in [l, x^*] \\
          f(x) \quad &\text{if} \, x \in [x^*, u] \\
     \end{cases}
\]
A similar procedure may be considered for the lower convex hull, also yielding a convex hull like 
\[
h^-(x)=\begin{cases}
        f(x) \quad &\text{if} \, x \in [l, x^\dagger] \\
         \frac{f(x^\dagger)-f(l)}{x^\dagger-l}(x-l) + f(l) \quad &\text{if} \, x \in [x^\dagger,u ] \\
          
     \end{cases}
\]
where $x^\dagger$ is the minimum amongst zeros for the function $f'(x) - \frac{f(x)-f(u)}{x-u}$. Then the altitude, $h^+(x)-h^-(x)$, is a concave piecewise function with three pieces, segmented into the intervals $[l, x^\dagger], [x^\dagger, x^*], [x^*, u]$. If the slope of the linear component of the upper hull is $\lambda^+$ and the slope of the linear component of the lower hull is $\lambda^-$, then the maximum is attained at one of four points: i) the point in $[l, x^\dagger]$ where $f'(x)=\lambda^-$; ii) the point in $[x^*, u]$ where $f'(x)=\lambda^+$; iii) $x^*$, or iv) $x^\dagger$. It is trivial to check all these points where $f$ is the sigmoid or tanh function. 

Regardless of which case we are in, once the upper and lower hulls have been computed and their difference has been maximized, the remaining steps are simple. It suffices to compute the subgradients of the lower hull and negative upper hull at that point, pick an element of their intersection and find the line that passes through the proper midpoint. This yields the right altitude and affine function for a vertical parallelogram. 

\subsection{Elementwise Multiplication } 
Now we consider the full set of cases for elementwise multiplication. Letting $S$ be the 2-dimensional set $\{(x, y\cdot x) \mid x\in [l, u],\quad y\in[\alpha,\beta]\}$. When $\alpha=\beta$, then the set $S$ is a line segment, and has vertical parallelogram with height $0$ and slope $\alpha$. When $\alpha < \beta$, and $l > 0$ the convex upper hull is $h^+(x)=\beta x$ and the convex lower hull is $h^-(x)=\alpha x$; vice versa for when $u < 0$. The only remaining case is when $\alpha < \beta$, and $l < 0< u$, which is the case considered in the main paper, where $h^+(x)$ is the line passing through $(l, \alpha l)$ and $(u, \beta u)$; and the lower hull passes through $(l, \beta l)$ and $(u, \alpha u)$. 

In all of the above cases, the upper and lower hulls are affine functions and their maximum over $[l,u]$ is attained at either $l$ or $u$. The admissable slopes are exactly the range $[\alpha, \beta]$.

\subsection{Absolute Value}
\begin{wrapfigure}{r}{0.32\textwidth}
 \includegraphics[width=0.32\textwidth]{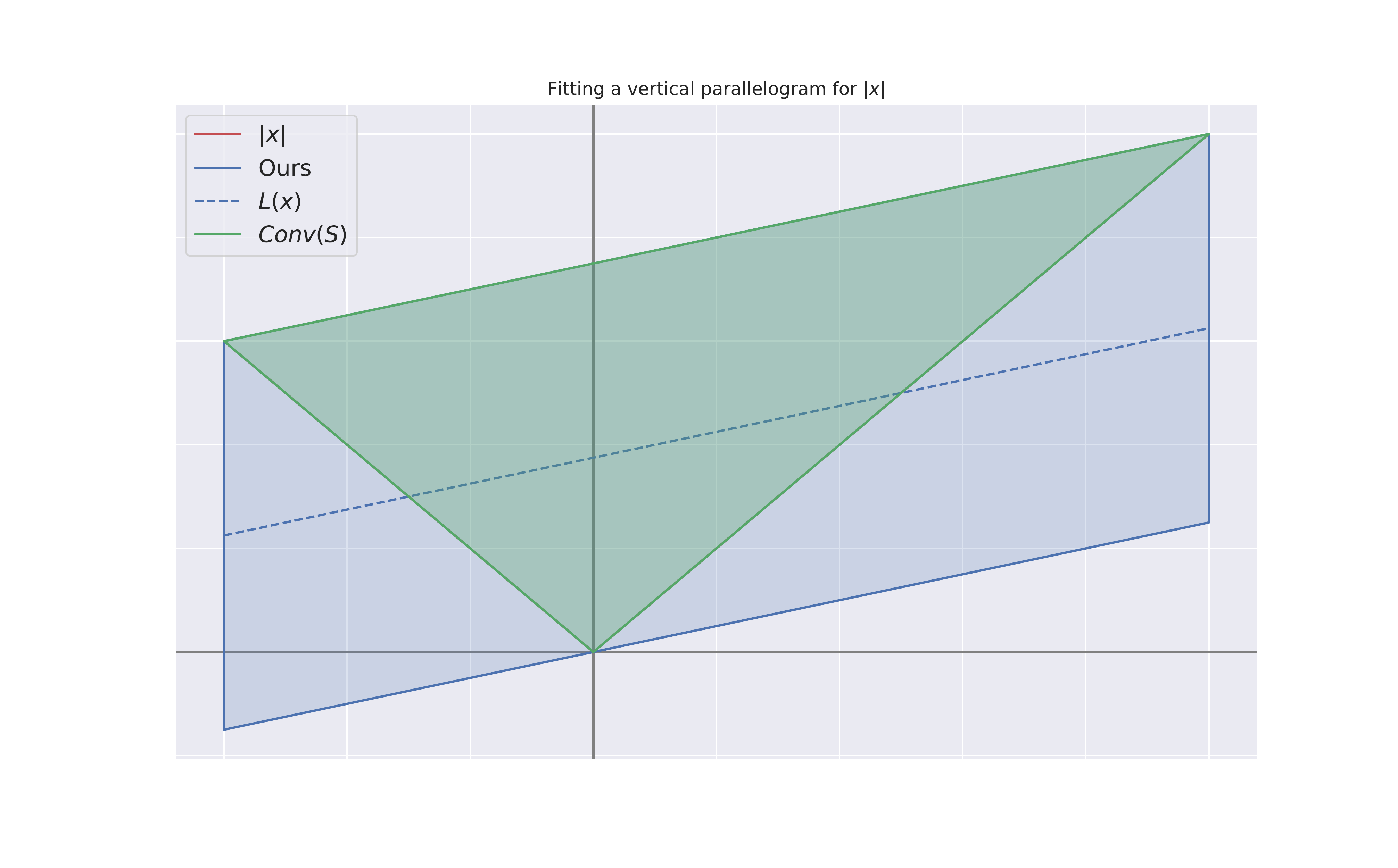}
 \caption{Convex hull and vertical parallelogram for the set $S=\{(x, |x|)\mid x\in[-3, 5]\}.$}
\end{wrapfigure} 
Now we consider the absolute value function, as used in the proof of Theorem \ref{thm:lp-relax}. Consider the set $\{(x, |x|)\mid x\in [l,u]\}$. When $l\cdot u\geq 0$, then the set $S$ lies on either the $y=x$ or $y=-x$ line, and the vertical parallelogram fit is equivalent to mapping through an affine function. On the other hand, when $l < 0 < u$, the set $S$ has a lower convex hull of $h^-(x)=|x|$, because $|x|$ is a convex function. The upper hull is the secant line connecting $(l, |l|)$ and $(u, |u|)$, which may be written as $h^+(x)=\frac{u+l}{u-l}(x) - \frac{2ul}{u-l}$. The maximum of the altitude, $h^+(x)-h^-(x)$, is attained at $x=0$, for which the maximum altitude is $-\frac{2ul}{u-l}$ and the height of the vertical parallelogram is $\mu=\frac{-ul}{u-l}$. The slope must be the slope of the (linear) upper hull, $\frac{u+l}{u-l}$ and the central line of the parallelogram must pass through the point $(0, \frac{-ul}{u-l})$. 
\newpage
\section{Experiments}
\subsection{Model Architectures} 
Here we will describe the structure of each of the architectures considered. For networks with only fully connected and elementwise nonlinearities, we denote the architectures by $[n_{in}, n_1,\dots n_L\dots n_{out}]$, where $n_i$ denotes the number of neurons in the $i^{th}$ hidden layer, and ReLU nonlinearities are implied between each layer. We will use the notation ``FC X'' to denote fully connected layers with an output of X neurons, and $\text{Conv}_s (C\times W \times H)$ to denote convolutional layers with a stride of $s$ and output dimension of $C$ channels, and kernels of size $W\times H$. Transpose convolutional layers are denoted $\text{Conv}_s^T (C\times W \times H)$. For layers of the same size repeated $k$ times, we'll denote this as $[layers]^{\times k}$. 
\paragraph{Toy Networks:} For the networks trained on the toy dataset, the input and output dimension are each 2, and the scalar-valued output is attained by taking the dot product with the vector $[+1, -1]$. These have varying depth, but all have architectures like 
\[x \to \Big[FC 100 \to ReLU\Big]^{\times k} \to FC 2 \to z \] 
where the number of hidden layers denotes the number of ReLU layers in the network.

\paragraph{Generators for MNIST and CIFAR:} For MNIST and CIFAR, we trained 6 VAEs and 2 GANs. These each have the same architecture with the exception of varying input/output shapes. The VAEs each have a latent dimension of 20 and the GANs have an input dimension of 100. We use the notation $D$ and $C$ to denote the output dimnension and channel: $(784, 1)$ for MNIST and $(3072, 3)$ for CIFAR-10.
\begin{itemize}
    \item \textbf{VAESmall:} $[D, 400, 200, 20, 200, 400, D].$ Where the decoder is just the $[20, 200, 400, D]$ subnetwork. 
    \item \textbf{VAEMed:} $[D, 400, 200, 100, 50, 100, 200, 400, D]$ where the decoder is just the $[50, 100, 200, 400, D]$ subnetwork. 
    \item \textbf{VAEBig:} $[D, 400, 200, 200, 200, 200, 100, 200, 200, 200, 200, 400, D]$, where the decoder is just the $[100, 200, 200, 200, 200, 400, D]$ subnetwork. 
    \item \textbf{VAECNN:} $x \to \text{Conv}_2 (16\times 4\times 4) \to ReLU \to \text{Conv}_2 (32 \times 4 \times 4) \to ReLU \to FC 50 \to FC 800 \to ReLU \to \text{Conv}_2^T (16 \times 5 \times 5) \to ReLU \to \text{Conv}_2^T (C\times 4 \times 4) \to Sigmoid.$
    \item \textbf{VAETanh}: Same as VAEMed with $tanh$ nonlinearities in place of ReLU. 
    \item \textbf{VC-Tanh}: Same as VAECNN with $tanh$ nonlinearities in place of ReLU. 
    \item \textbf{FFGAN}: $[100, 256, 512, 1024, D].$
    \item \textbf{DCGAN}: $x \to \text{Conv}_1^T (256 \times 4 \times 4) \to ReLU \to \text{Conv}_2^T (128 \times 4 \times 4) \to ReLU \to \text{Conv}_2^T (64\times 4 \times 4) \to ReLU \to \text{Conv}_2^T (C \times 4 \times 4) \to tanh.$
\end{itemize}

\paragraph{Classifiers for MNIST and CIFAR:} For MNIST and CIFAR, we trained three fully connected networks. We refer to these as $\{tiny, small, med,\}-*$. Each of these were trained with both the ReLU and tanh nonlinearities with both standard and PGD adversarial training. We will describe the training techniques in the next section. 
\begin{itemize} 
\item \textbf{Tiny*}: $[D, 20, 20, 10].$
\item \textbf{Small*}: $[D, 100, 100, 100, 10].$
\item \textbf{Med*}: $[D] +[100]\times 6 + [10].$
\end{itemize}
\subsection{Datasets, Training Methods, and Computing Environment} 
\paragraph{Computing Environment: } All networks were trained using Pytorch on a machine with $2\times$ GeForce RTX 2070 GPU's. All Lipschitz evaluations were performed using the CPU only, an Intel i7-9700K. Mixed integer programming evalutaions were performed using 4 cores and the Gurobi optimizer.
\paragraph{Datasets:} 
The CIFAR-10 and MNIST datasets are standard. For the dataset taken from \cite{Aziznejad2020-aq}, we generate 1000 points uniformly randomly in $[-1,1]^2$ and attach the label 1 if the norm of the data point is $\leq \frac{2}{\pi}$.
\paragraph{Training Methods:} 
We outline the methods used to train each set of networks: 
\begin{itemize}
    \item \textbf{Toy Dataset Nets:} We trained using the CrossEntropy loss and the Adam optimizer with a learning rate of $0.001$, where we trained for 200 epochs.
    \item \textbf{VAEs:} For the VAE networks, we train with the standard VAE loss: a sum of binary cross-entropy between the reconstructed example and original input, and a KL-divergence term. We train for 50 epochs for all VAEs with the Adam optimizer and a learning rate of 0.001
    \item \textbf{GANs:} For the GANs, we train using the binary cross-entropy applied to the discriminator output. We train with Adam $(lr=0.001)$ and 25 epochs. 
    \item \textbf{Classifiers:} For the classifiers considered, we always train for 50 epochs using Adam $(lr=0.001)$ and either the standard Cross-Entropy loss or the PGD loss for adversarial training. For MNIST networks, we allow an adversarial budget of $\ell_\infty$ norm of 0.1, and the adversary takes 10 steps with step-size $0.2$. For CIFAR-10, the adversary has a budget of $\frac{8}{255}$ and takes 10 steps of stepsize $\frac{2}{255}.$
\end{itemize}

\subsection{Varying Radius} 

\begin{wrapfigure}{r}{0.32\textwidth}
 \includegraphics[width=0.32\textwidth]{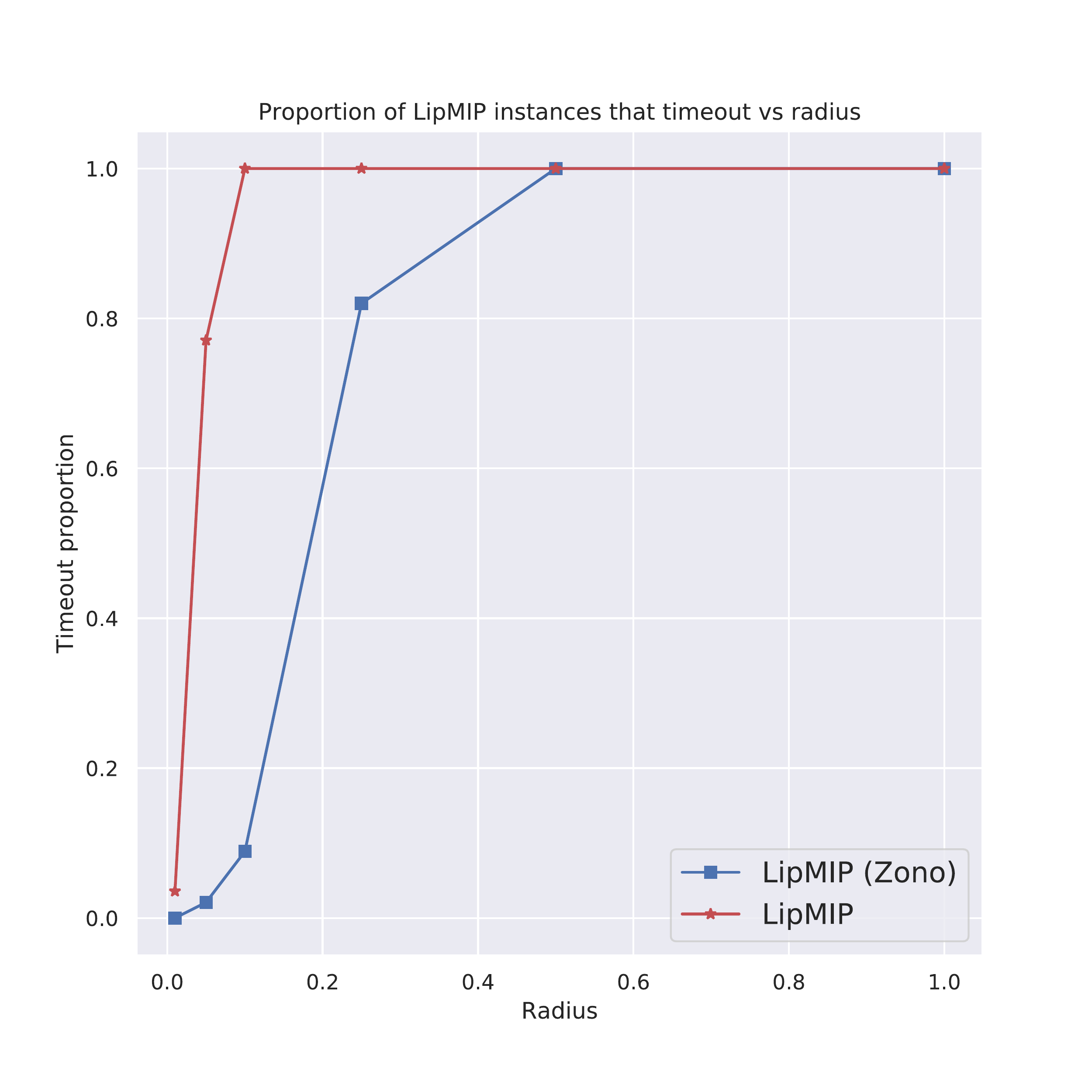}
 \caption{Proportion of timed out examples for LipMIP versus LipMIP using ZLip as a first step across varius radii. }\label{fig:lipmip-timeout}
\end{wrapfigure} 
We consider the effect of varying the radius of the region we evaluate Lipschitz constants over. We expect that as the radius increases, the bounds for the preactivations at each layer will become more loose. For ReLU networks, when zero is strictly contained in the preactivation bounds, a new degree of freedom needs to be introduced to the layerwise zonotope approximations. Hence, as more degrees of freedom are introduced, we expect the bound returned by ZLip to be looser and the runtime will be less efficient since the representation of each zonotope will be larger. However, other methods will likely also yield looser bounds as well. We also note that a more intelligent strategy that prunes the zonotope representation size to a more managable size may be employed, though we leave such performance improvements for future work. 

\paragraph{Toy Dataset:} First we examine the effect of changing the evaluation radius on the network trained on the circle dataset with 6 hidden layers of width 100 and the ReLU nonlinearity. We report the results in table \ref{table:toy-comparison}, where we have evaluated over 64 elements from the test set  and $\ell_\infty$ balls of each radius. We set a timeout of 120s for the mixed-integer programming approaches, at which point the tightest upper bound is returned, explaining the discrepancy between the results on the first two columns. The proportion of timed out examples is presented in Figure \ref{fig:lipmip-timeout}. We observe that for networks with small input dimension, CLEVER is fairly accurate and can be viewed as a surrogate for the true Lipschitz constant when the LipMIP results time out. In this case we see that for all but the global evaluation returned by SeqLip, all techniques provide looser bounds as the radius increases, however ZLip remains significantly tighter than Fast-Lip and even yields a tighter result than the tightest upper bound provided  by LipMIP after this procedure times out. 

\begin{table}[]
\centering
\caption{Evaluation of various Lipschitz computation techniques for the network with 6 hidden layers trained on the Circle dataset. }\label{table:toy-comparison}
\begin{tabular}{lllllll}
\hline 
\multicolumn{7}{c}{Lipschitz Values for $6\times100$ Circle Network}                                    \\ \hline
\multicolumn{1}{l|}{Radius} & LipMIP (Zono) & LipMIP    & ZLip      & Fast-Lip  & SeqLip    & CLEVER    \\ \hline
\multicolumn{1}{l|}{0.01}   & 2.62\e{2}     & 2.97\e{2} & 2.98\e{2} & 2.82\e{3} & 2.72\e{3} & 2.95\e{2} \\
\multicolumn{1}{l|}{0.05}   & 3.69\e{2}     & 2.79\e{3} & 4.22\e{2} & 9.94\e{3} & 2.72\e{3} & 3.61\e{2} \\
\multicolumn{1}{l|}{0.1}    & 4.55\e{2}     & 6.09\e{3} & 6.17\e{2} & 1.29\e{4} & 2.72\e{3} & 4.39\e{2} \\
\multicolumn{1}{l|}{0.25}   & 8.91\e{2}     & 1.67\e{4} & 1.55\e{3} & 2.00\e{4} & 2.72\e{3} & 4.65\e{2} \\
\multicolumn{1}{l|}{0.5}    & 2.51\e{3}     & 2.20\e{4} & 3.70\e{3} & 2.46\e{4} & 2.72\e{3} & 6.72\e{2} \\
\multicolumn{1}{l|}{1.0}    & 6.49\e{3}     & 2.82\e{4} & 5.81\e{3} & 3.10\e{4} & 2.72\e{3} & 9.04\e{2}
\end{tabular}

\end{table}

\paragraph{Generative Models:} 
Here we present results on the generative models when we vary the radius of the input region. We focus primarily on the VAEs described above for both the MNIST and CIFAR-10 datasets. These results for VAEMEd on MNIST are displayed in Figure \ref{fig:mnist-radseries}. As expected, increasing radius size increases the estimated Lipschitz bound for both ZLip and Fast-Lip.  The relative gap decreases as the radius increases, indicating that ZLip is comparatively tighter when few neurons are unstable, which we attribute to ZLip being able to perfectly map the affine layers. It is also expected that Fast-Lip does not get slower even as more neurons become unstable, as the representation size of a hyperbox in $\R^d$ will always be $O(d)$, and looser neuron bounds just increases the looseness of the approximation without increasing runtime. 

\begin{figure*}[ht]
    \centering
    \includegraphics[width=0.4\textwidth]{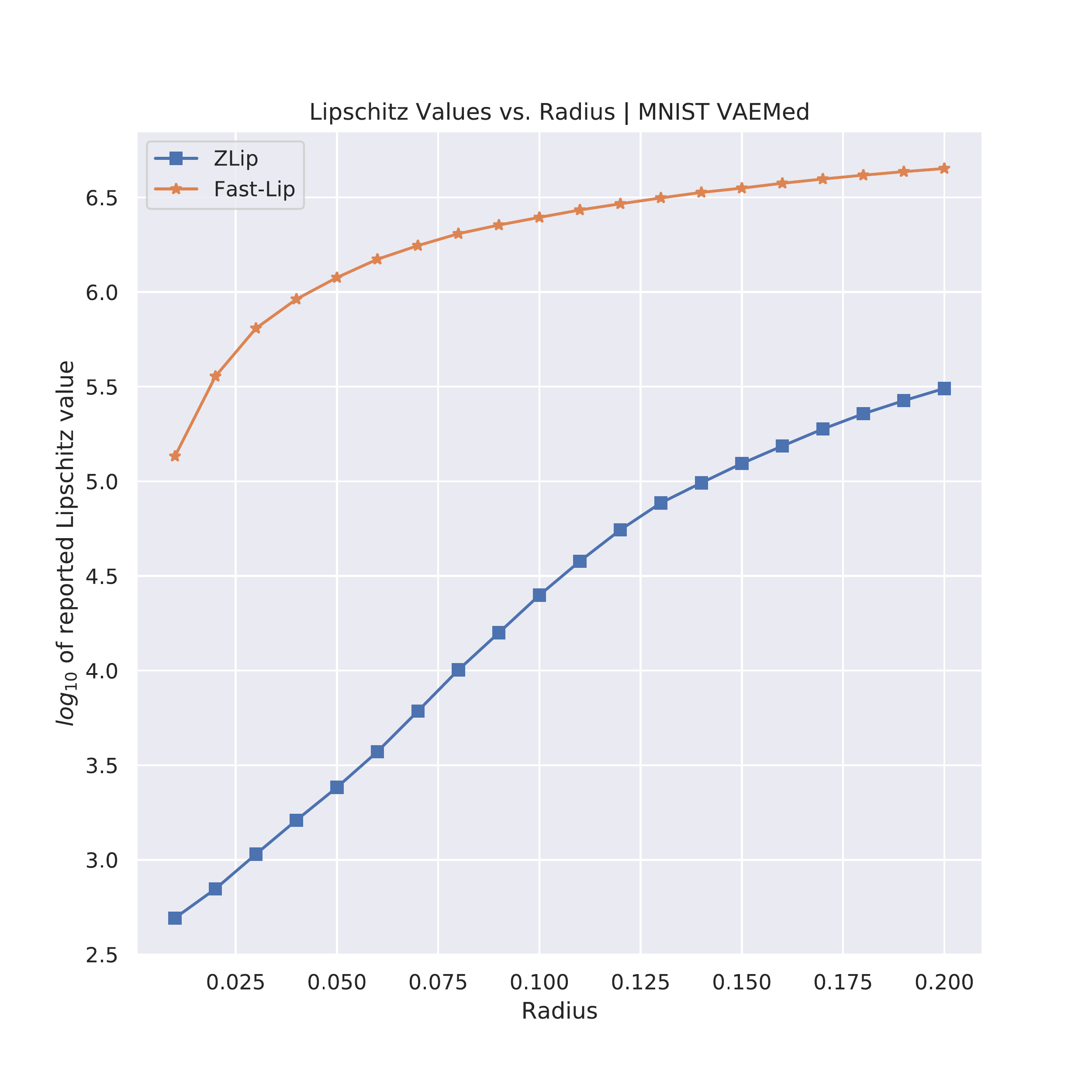}
    \includegraphics[width=0.4\textwidth]{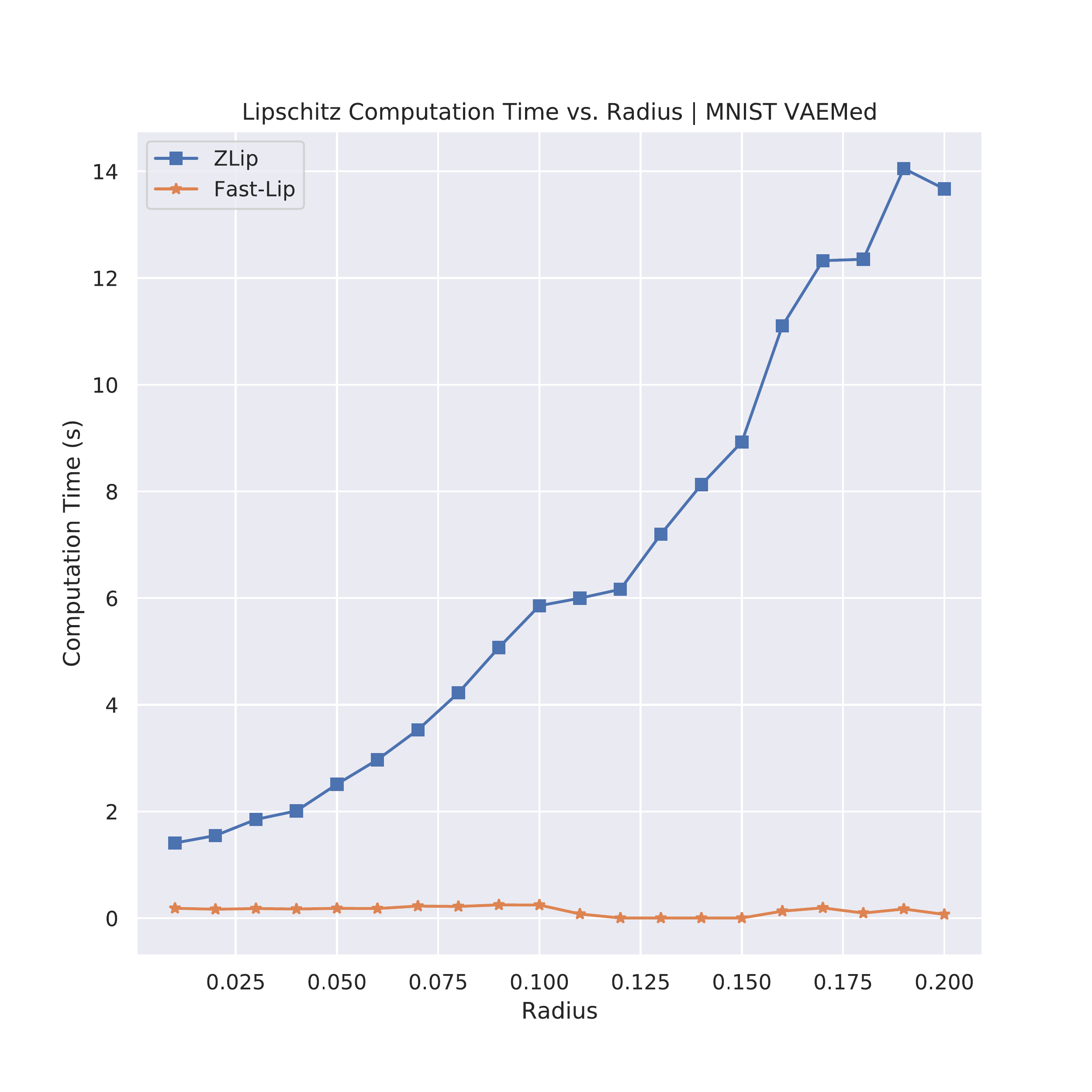}\\
    \includegraphics[width=0.4\textwidth]{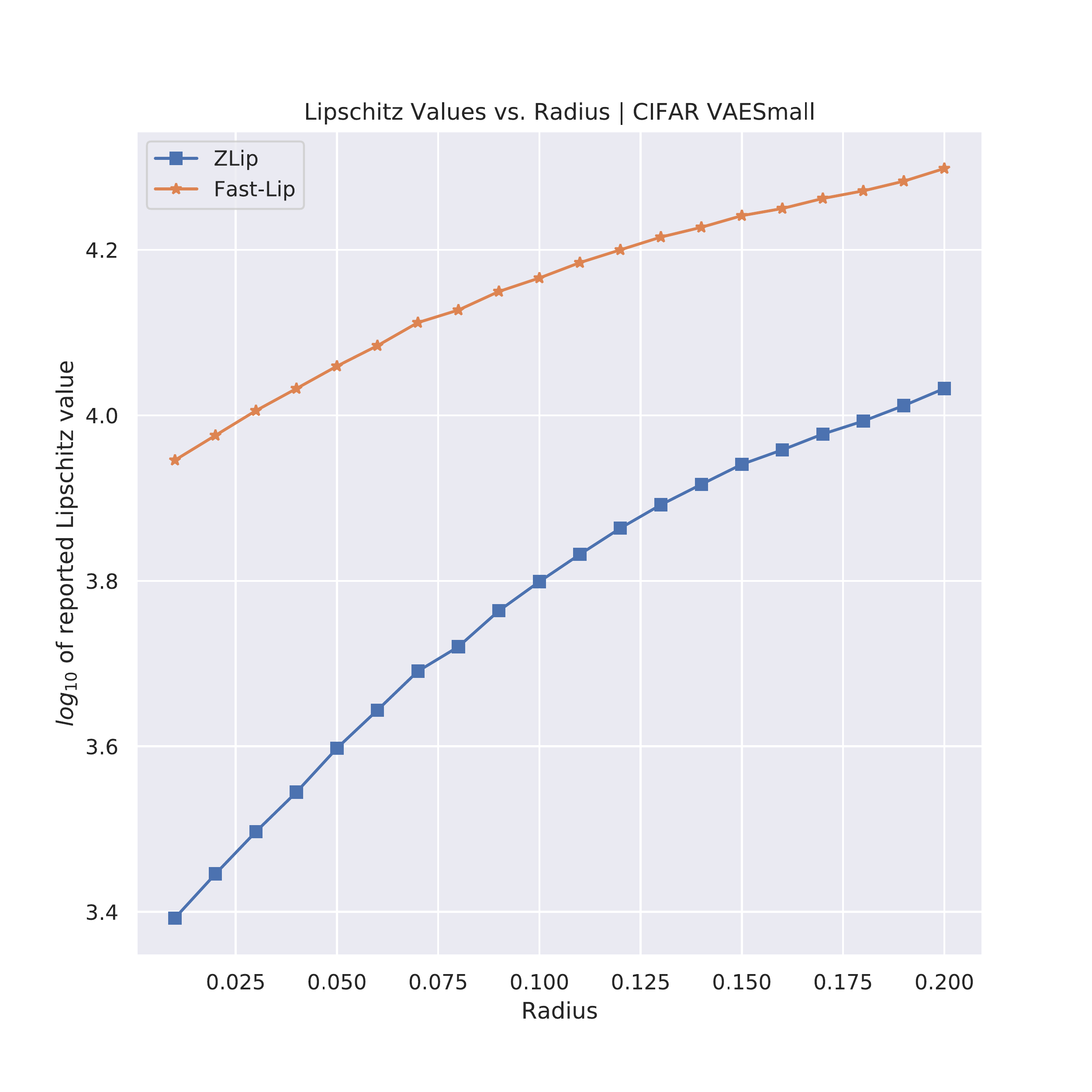}
    \includegraphics[width=0.4\textwidth]{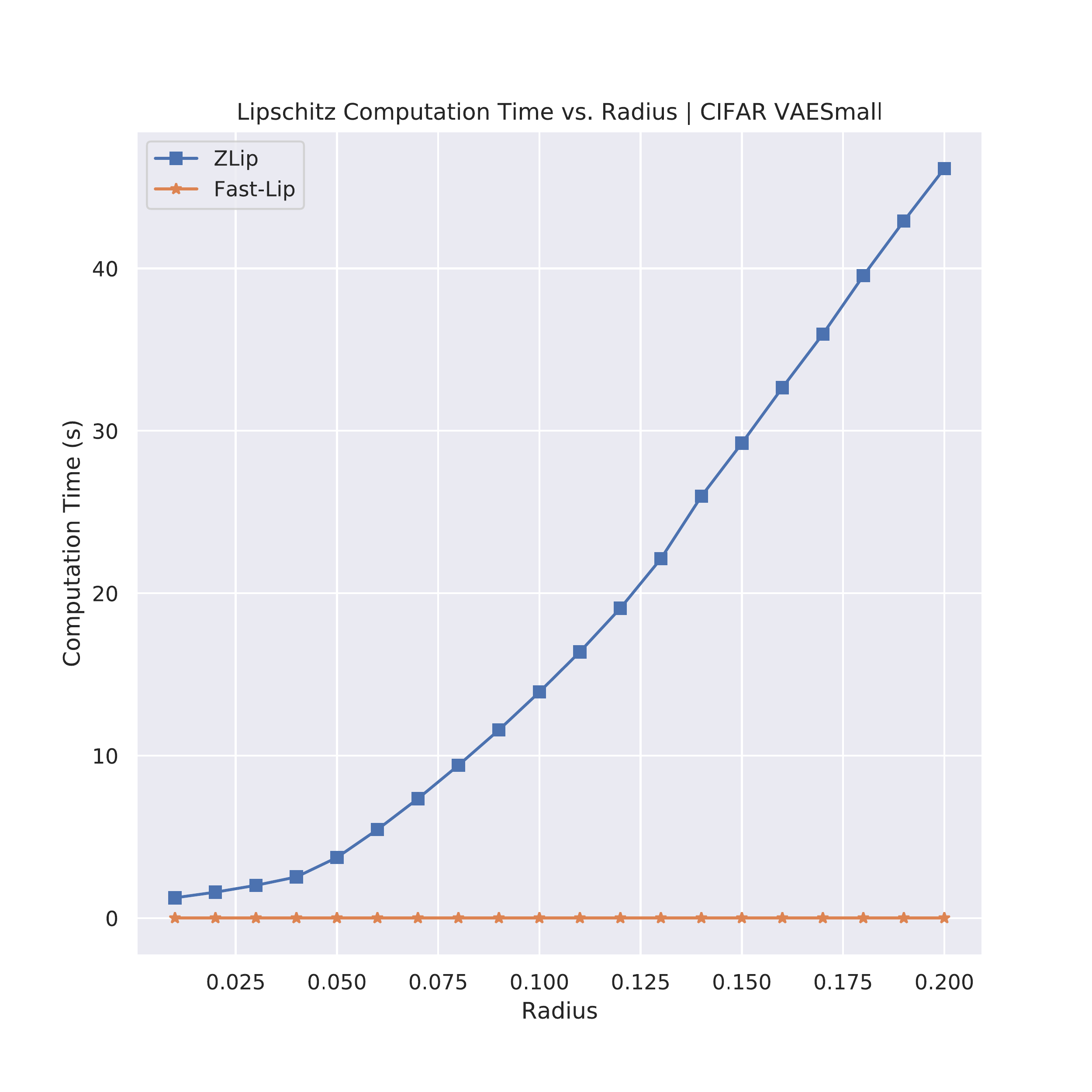}
    \vspace{-1em}
    \caption{Reported Lipschitz constants and times for the MNIST MedVAE as we increase the radius of the region over which we evaluate Lipschitz constant. (Top Left) reports the values on a log-scale, noticing that both ZLip and Fast-Lip increase their estimate as the radius increases, the relative gap is largest for small radii.(Top Right) we display times in seconds versus the radius. The increase in running time of ZLip is due to the increase in number of unstable neurons, which increases the size of the representations of the zonotopes that must be passed through each layer. (Bottom) Reported Lipschitz constants and times for the CIFAR SmallVAE as we increase radius.}
   \label{fig:mnist-radseries}
\end{figure*} 

\FloatBarrier
\subsection{Ablating the choice of abstract domain:}
ZLip operates by iteratively building \emph{zonotopes} to satisfy the containments of equations \ref{eq:setrec-inputs}-\ref{eq:setrec-jac}, specifically we use zonotopes in both the forward pass (such as DeepZ, DiffAI), but also zonotopes in the backward pass. To examine the importance of zonotopes in both directions, we replace the zonotopes with hyperboxes in one or both of the directions. Note that using hyperboxes in both the forward and backward directions is FastLip. We denote the method that uses hyperboxes in the forward pass, but zonotopes in the backward pass as `Hyperbox -> Zono', and vice versa. 

\paragraph{Toy Dataset:} In Figure \ref{fig:circle-ablation} we compare the performance of the different techniques on networks trained on the toy dataset. The y-axis records the logarithm of the average ratio between each method's Lipschitz estimate and ZLip's Lipschitz estimate for each example. We evaluate on 100 random points for each x-value. On the left panel we vary the architecture size while evaluating on inputs that are hyperboxes with radius 0.1. For every network considered, on average, the abstract domain in the forward pass is more important, and this gap becomes more apparent as the size of the network increases. On the right panel, we fix a network and evaluate the performance as we vary the size of the certified region. For small radii, the choice of the forward domain is more important, however as the radius increases, the backward domain becomes more important. We conjecture this is because, for large radii, there is much uncertainty about which ReLU's are fixed and the performance of zonotopes and hyperboxes in the forward pass becomes equivalent. 

\begin{figure*}[ht]
    \centering
    \includegraphics[width=0.48\textwidth]{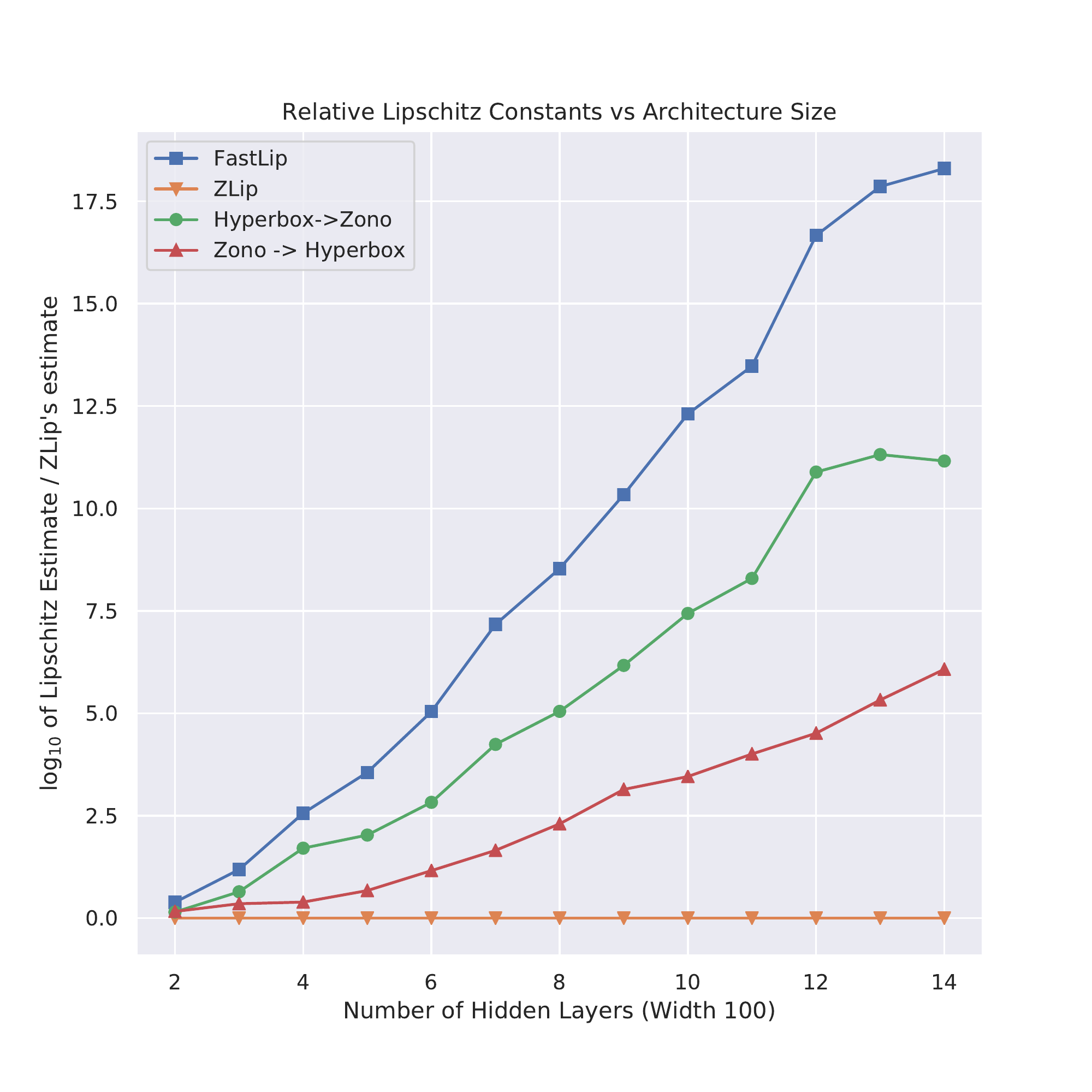}
    \includegraphics[width=0.48\textwidth]{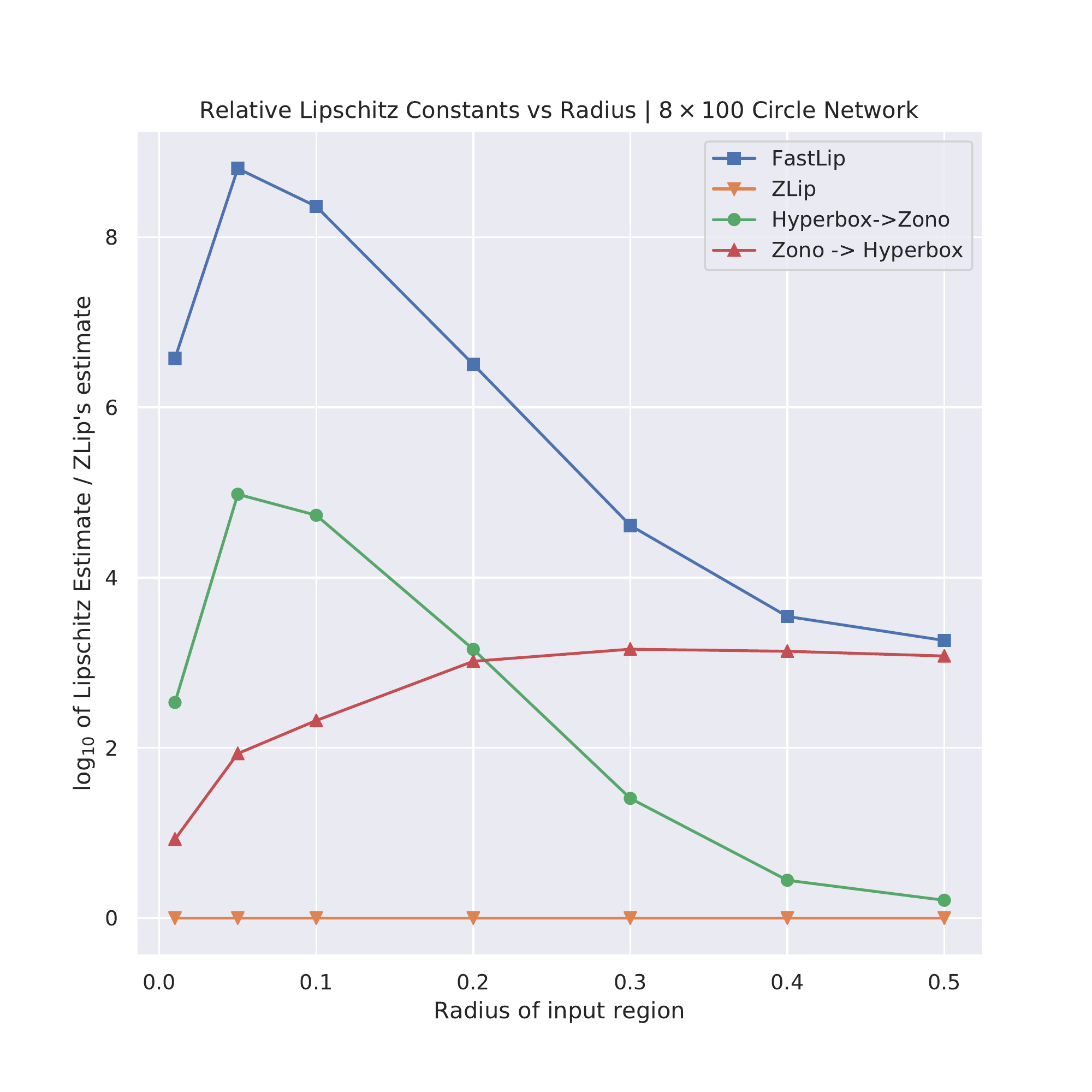}
    \vspace{-1em}
    \caption{Log-mean-ratio of reported Lipschitz estimates (relative to ZLip) on varying networks trained on the Circle dataset. (Left) reports the values as we vary the network size, demonstrating that the forward domain is more important as the network size increases. (Right) reports the values for a fixed network as we vary the input radius size. Larger input radii have more uncertain ReLU neurons and the choice in forward domain becomes relatively less important.}
   \label{fig:circle-ablation}
\end{figure*}

\paragraph{MNIST Generative models:} 
In Table \ref{table:vae-ablation} we evaluate the choice of abstract domains on MNIST generative models. We consider the decoders of the three different VAE's trained on MNIST and evaluate over two different radii, considering hyperboxes of the denoted radius surrounding the encodings of random test MNIST examples. Here we again see that neither of the single-zonotope approaches is unilaterally better, however larger models tend to benefit more from using zonotopes in the forward pass. 
\begin{table}[]
\centering
\caption{Mean ratio of Lipschitz estimates provided by other abstract domains relative to the estimate provided by ZLip, evaluated on MNIST VAE decoders. We see that as the model becomes larger, using zonotopes in the forward pass yields a tighter bound. }\label{table:vae-ablation}
\begin{tabular}{@{}l|lll|lll@{}}
\toprule
Radius   & \multicolumn{3}{c|}{0.01}                                                                                  & \multicolumn{3}{c}{0.1}                                                                                  \\ \cmidrule(l){2-7} 
Method   & \multicolumn{1}{c}{FastLip} & \multicolumn{1}{c}{$H\rightarrow Z$} & \multicolumn{1}{c|}{$Z\rightarrow H$} & \multicolumn{1}{c}{FastLip} & \multicolumn{1}{c}{$H\rightarrow Z$} & \multicolumn{1}{c}{$Z\rightarrow H$} \\ \midrule
VAESmall & 7.85                        & \textbf{1.33}                        & 5.89                                  & 9.39                        & \textbf{3.08}                        & 3.11                                 \\
VAEMed   & 276.41                      & \textbf{12.71}                       & 43.78                                 & 410.92                      & 51.81                                & \textbf{12.25}                       \\
VAEBig   & 4238.83                     & 123.21                               & \textbf{85.65}                        & 26018.02                    & 1357.93                              & \textbf{26.64}                       \\ \bottomrule
\end{tabular}
\end{table}

\FloatBarrier
\subsection{Experimental Results on Classifiers} 
\paragraph{MNIST:}
For completeness and comparison against other networks on more realistic networks, we present results of our Lipschitz bounding technique versus several recent works for a variety of networks trained both with the standard classification loss as well as those trained adversarially. We evaluate the Lipschitz value returned by SeqLip, LipSDP, Fast-Lip, CLEVER, and ZLip for inputs of radius 0.1 centered at elements taken from the test set. If $f$ is the trained classifier which has outputs in $\R^{10}$, we consider the Lipschitz constant of the network $f_i(\cdot)-f_{i+1}(\cdot)$ for each example where the true label is $i$. 
First we present the values for the MNIST networks trained with the standard CrossEntropy Loss in Table \ref{table:mnist-val-xent} and times are presented in Table \ref{table:mnist-time-xent}. We remark that the bounds reported by CLEVER for networks with high dimension have been shown to be quite loose, so it is unclear what the correct Lipschitz value is for each of these networks. The most salient points here are that the values returned by ZLip are comparable to those returned by LipSDP at a significantly faster runtime. We also note that LipSDP errors when applied to networks as large as MedReLU or MedTanh. 

For the PGD trained MNIST networks, we present the values and times in tables \ref{table:mnist-val-pgd}, \ref{table:mnist-time-pgd}. In direct comparison to the tables for the networks trained with CrossEntropy loss, we notice that all methods report lower values for the adversarially trained network. This tracks with prior work that adversarial regularization serves as a form of Lipschitz regularization. 

\paragraph{CIFAR-10:}
The same experiments as above were performed on networks trained to classify the CIFAR-10 dataset. We present these results in Tables \ref{table:cifar-val-xent}-\ref{table:cifar-time-pgd}, but note that these results are qualitatively very similar to the results for the MNIST networks.

\begin{table}[]
\centering 
\caption{Lipschitz values reported by various networks evaluated on the various Classifiers described above. All numbers report an average over regions of radius 0.1 centered at examples from the test set. }\label{table:mnist-val-xent}
\begin{tabular}{@{}llllll@{}}
\toprule
\multicolumn{6}{c}{Lipschitz Estimates (MNIST) | CrossEntropy Loss}                        \\ \midrule
\multicolumn{1}{l|}{Network}   & SeqLip    & SDP       & Fast-Lip  & CLEVER    & ZLip      \\ \midrule
\multicolumn{1}{l|}{TinyReLU}  & 3.51\e{3} & 2.94\e{3} & 7.36\e{3} & 1.15\e{1} & 5.59\e{3} \\
\multicolumn{1}{l|}{SmallReLU} & 2.12\e{4} & 1.52\e{4} & 1.94\e{5} & 9.59\e{0} & 9.34\e{4} \\
\multicolumn{1}{l|}{MedReLU}   & 1.08\e{6} & ---       & 1.45\e{8} & 1.06\e{1} & 1.77\e{7} \\
\multicolumn{1}{l|}{TinyTanh}  & 1.33\e{4} & 1.14\e{4} & 2.56\e{4} & 2.97\e{1} & 1.98\e{4} \\
\multicolumn{1}{l|}{SmallTanh} & 5.80\e{4} & 4.29\e{4} & 3.24\e{5} & 3.26\e{1} & 1.68\e{5} \\
\multicolumn{1}{l|}{MedTanh}   & 5.50\e{6} & ---       & 4.29\e{8} & 3.56\e{2} & 6.94\e{7}
\end{tabular}
\end{table}

\begin{table}[]
\centering 
\caption{Times for MNIST classifiers trained with the Cross-Entropy loss}\label{table:mnist-time-xent}
\begin{tabular}{llllll}
\hline
\multicolumn{6}{c}{Lipschitz Times (MNIST) | CrossEntropy Loss}                               \\ \hline
\multicolumn{1}{l|}{Network}   & SeqLip     & SDP       & Fast-Lip   & CLEVER    & ZLip       \\ \hline
\multicolumn{1}{l|}{TinyReLU}  & 5.08\e{-1} & 1.78\e{1} & 1.75\e{-3} & 5.98\e{1} & 4.20\e{-1} \\
\multicolumn{1}{l|}{SmallReLU} & 2.28\e{0}  & 2.63\e{2} & 9.73\e{-2} & 1.01\e{2} & 9.36\e{-1} \\
\multicolumn{1}{l|}{MedReLU}   & 3.35\e{0}  & --- & 1.07\e{-1} & 1.75\e{2} & 2.13\e{0}  \\
\multicolumn{1}{l|}{TinyTanh}  & 5.90\e{1}  & 1.82\e{1} & 1.96\e{-3} & 5.98\e{1} & 4.22\e{-1} \\
\multicolumn{1}{l|}{SmallTanh} & 4.56\e{0}  & 2.28\e{2} & 1.05\e{-1} & 1.09\e{2} & 9.87\e{-1} \\
\multicolumn{1}{l|}{MedTanh}   & 9.91\e{0}  & --- & 1.08\e{-1} & 1.77\e{2} & 2.03\e{0} 
\end{tabular}
\end{table}

\begin{table}[]
\centering
\caption{Values for MNIST classifiers trained with the PGD loss}
\label{table:mnist-val-pgd}

\begin{tabular}{llllll}

\hline
\multicolumn{6}{c}{Lipschitz Estimates (MNIST) | PGD Loss}                                \\ \hline
\multicolumn{1}{l|}{Network}   & SeqLip    & SDP       & Fast-Lip  & CLEVER    & ZLip      \\ \hline
\multicolumn{1}{l|}{TinyReLU}  & 3.97\e{2} & 2.66\e{2} & 3.48\e{2} & 1.07\e{0} & 1.42\e{2} \\
\multicolumn{1}{l|}{SmallReLU} & 1.52\e{3} & 9.35\e{2} & 1.98\e{4} & 1.85\e{0} & 5.39\e{3} \\
\multicolumn{1}{l|}{MedReLU}   & 3.91\e{4} & 1.64\e{4} & 1.45\e{7} & 2.26\e{0} & 9.36\e{5} \\
\multicolumn{1}{l|}{TinyTanh}  & 1.38\e{3} & 8.61\e{2} & 6.86\e{2} & 3.14\e{0} & 5.17\e{2} \\
\multicolumn{1}{l|}{SmallTanh} & 7.96\e{3} & 4.81\e{3} & 5.68\e{4} & 6.24\e{0} & 2.42\e{4} \\
\multicolumn{1}{l|}{MedTanh}   & 2.73\e{5} & 1.30\e{5} & 5.98\e{7} & 1.90\e{1} & 6.27\e{6}
\end{tabular}
\end{table}

\begin{table}[]
\centering 
\caption{Times for MNIST classifiers trained with the PGD loss}
\label{table:mnist-time-pgd}

\begin{tabular}{@{}llllll@{}}
\toprule
\multicolumn{6}{c}{Lipschitz Times (MNIST) | PGD Loss}                                        \\ \midrule
\multicolumn{1}{l|}{Network}   & SeqLip     & SDP       & Fast-Lip   & CLEVER    & ZLip       \\ \midrule
\multicolumn{1}{l|}{TinyReLU}  & 1.18\e{-2} & 1.14\e{1} & 1.74\e{-3} & 2.71\e{1} & 1.10\e{-1} \\
\multicolumn{1}{l|}{SmallReLU} & 2.71\e{1}  & 1.77\e{2} & 1.13\e{-1} & 1.13\e{2} & 1.12\e{0}  \\
\multicolumn{1}{l|}{MedReLU}   & 3.10\e{0}  & 4.42\e{2} & 1.11\e{-1} & 1.76\e{2} & 2.17\e{0}  \\
\multicolumn{1}{l|}{TinyTanh}  & 6.11\e{-1} & 1.62\e{1} & 1.84\e{-3} & 5.33\e{1} & 3.77\e{-1} \\
\multicolumn{1}{l|}{SmallTanh} & 4.61\e{0}  & 1.77\e{2} & 8.66\e{-2} & 7.72\e{1} & 7.31\e{-1} \\
\multicolumn{1}{l|}{MedTanh}   & 3.29\e{0}  & 5.00\e{2} & 1.16\e{-1} & 1.86\e{2} & 2.15\e{0} 
\end{tabular}
\end{table}

\newpage 

\begin{table}[]
\centering 
\caption{Values for CIFAR-10 classifiers trained with the Cross Entropy loss}
\label{table:cifar-val-xent}
\begin{tabular}{llllll}
\hline
\multicolumn{6}{c}{Lipschitz Values (CIFAR-10) | CrossEntropy Loss}                         \\ \hline
\multicolumn{1}{l|}{Network}   & SeqLip    & SDP       & Fast-Lip  & CLEVER     & ZLip      \\ \hline
\multicolumn{1}{l|}{TinyReLU}  & 1.11\e{3} & 8.51\e{2} & 1.13\e{3} & 6.05\e{-1} & 8.89\e{2} \\
\multicolumn{1}{l|}{SmallReLU} & 1.27\e{4} & 7.23\e{3} & 1.55\e{5} & 1.43\e{0}  & 6.32\e{4} \\
\multicolumn{1}{l|}{MedReLU}   & 4.30\e{5} & 1.78\e{5} & 2.42\e{8} & 2.22\e{0}  & 2.13\e{7} \\
\multicolumn{1}{l|}{TinyTanh}  & 4.12\e{3} & 3.22\e{3} & 5.52\e{3} & 2.28\e{0}  & 4.32\e{3} \\
\multicolumn{1}{l|}{SmallTanh} & 2.61\e{4} & 1.77\e{4} & 1.72\e{5} & 5.73\e{0}  & 7.82\e{4} \\
\multicolumn{1}{l|}{MedTanh}   & 1.28\e{6} & ---       & 1.86\e{8} & 1.26\e{1}  & 2.22\e{7}
\end{tabular}
\end{table}

\begin{table}[]
\centering 
\caption{Values for CIFAR-10 classifiers trained with the PGD loss}
\label{table:cifar-val-pgd}
\begin{tabular}{llllll}
\hline
\multicolumn{6}{c}{Lipschitz Values (CIFAR-10) | PGD Loss}                                  \\ \hline
\multicolumn{1}{l|}{Network}   & SeqLip    & SDP       & Fast-Lip  & CLEVER     & ZLip      \\ \hline
\multicolumn{1}{l|}{TinyReLU}  & 3.01\e{2} & 1.92\e{2} & 4.63\e{1} & 1.17\e{-1} & 3.95\e{1} \\
\multicolumn{1}{l|}{SmallReLU} & 2.03\e{3} & 1.22\e{3} & 2.15\e{4} & 3.62\e{-1} & 7.75\e{3} \\
\multicolumn{1}{l|}{MedReLU}   & 3.61\e{4} & 1.26\e{4} & 1.70\e{7} & 5.22\e{-1} & 1.26\e{6} \\
\multicolumn{1}{l|}{TinyTanh}  & 1.81\e{3} & 1.31\e{3} & 1.05\e{3} & 8.94\e{-1} & 8.91\e{2} \\
\multicolumn{1}{l|}{SmallTanh} & 1.51\e{4} & 7.25\e{3} & 2.62\e{4} & 2.67\e{0}  & 1.35\e{4} \\
\multicolumn{1}{l|}{MedTanh}   & 2.57\e{5} & 9.56\e{4} & 1.41\e{7} & 3.14\e{0}  & 1.81\e{6}
\end{tabular}
\end{table}

\begin{table}[]
\centering 
\caption{Times for CIFAR-10 classifiers trained with the CrossEntropy loss}
\label{table:cifar-time-xent}
\begin{tabular}{llllll}
\hline
\multicolumn{6}{c}{Lipschitz Times (CIFAR-10) | CrossEntropy Loss}                            \\ \hline
\multicolumn{1}{l|}{Network}   & SeqLip     & SDP       & Fast-Lip   & CLEVER    & ZLip       \\ \hline
\multicolumn{1}{l|}{TinyReLU}  & 2.32\e{-1} & 5.02\e{2} & 9.59\e{-2} & 5.91\e{1} & 6.24\e{-1} \\
\multicolumn{1}{l|}{SmallReLU} & 7.14\e{0}  & 2.88\e{3} & 1.07\e{-1} & 1.06\e{2} & 1.16\e{0}  \\
\multicolumn{1}{l|}{MedReLU}   & 7.45\e{1}  & 4.81\e{3} & 6.38\e{-2} & 9.88\e{1} & 1.32\e{0}  \\
\multicolumn{1}{l|}{TinyTanh}  & 6.45\e{-1} & 5.05\e{2} & 1.18\e{-1} & 7.28\e{1} & 8.66\e{-1} \\
\multicolumn{1}{l|}{SmallTanh} & 3.11\e{1}  & 3.25\e{3} & 1.17\e{-1} & 1.22\e{2} & 1.22\e{0}  \\
\multicolumn{1}{l|}{MedTanh}   & 1.09\e{0}  & 6.78\e{3} & 6.34\e{-3} & 3.98\e{0} & 1.89\e{-1}
\end{tabular}
\end{table}

\begin{table}[]
\centering
\caption{Times for CIFAR-10 classifiers trained with the PGD loss}
\label{table:cifar-time-pgd}
\begin{tabular}{llllll}
\hline
\multicolumn{6}{c}{Lipschitz Times (CIFAR-10) | PGD Loss}                                     \\ \hline
\multicolumn{1}{l|}{Network}   & SeqLip     & SDP       & Fast-Lip   & CLEVER    & ZLip       \\ \hline
\multicolumn{1}{l|}{TinyReLU}  & 1.47\e{-1} & 3.94\e{2} & 1.09\e{-1} & 6.74\e{1} & 7.93\e{-1} \\
\multicolumn{1}{l|}{SmallReLU} & 4.11\e{1}  & 2.52\e{3} & 9.89\e{-2} & 9.12\e{1} & 1.07\e{0}  \\
\multicolumn{1}{l|}{MedReLU}   & 5.22\e{1}  & 4.01\e{3} & 1.14\e{-1} & 1.89\e{2} & 2.39\e{0}  \\
\multicolumn{1}{l|}{TinyTanh}  & 8.12\e{-2} & 4.60\e{2} & 1.13\e{-1} & 7.02\e{1} & 8.25\e{-1} \\
\multicolumn{1}{l|}{SmallTanh} & 4.97\e{0}  & 3.28\e{3} & 1.13\e{-1} & 1.22\e{2} & 1.23\e{0}  \\
\multicolumn{1}{l|}{MedTanh}   & 1.13\e{0}  & 3.90\e{3} & 5.77\e{-3} & 3.98\e{0} & 1.89\e{-1}
\end{tabular}
\end{table}

\end{document}